\documentclass{article}

\usepackage[final]{colm2026_conference}
\usepackage{url}

\usepackage{microtype}
\usepackage{graphicx}
\usepackage{booktabs}
\usepackage{multirow}
\usepackage{pifont}

\usepackage{amsmath,amssymb,amsthm}
\usepackage{hyperref}
\usepackage[capitalize,noabbrev]{cleveref}
\usepackage{bm}
\usepackage{algorithm}
\usepackage{algorithmic}
\usepackage{subcaption}
\usepackage{float}
\usepackage{xcolor}
\usepackage{tcolorbox}
\tcbuselibrary{skins,breakable}
\usepackage{pgfplots}
\pgfplotsset{compat=1.17}
\usepackage{tikz}
\usepackage{lineno}

\definecolor{color1}{HTML}{2E86AB}
\definecolor{color2}{HTML}{A23B72}
\definecolor{color3}{HTML}{F18F01}
\definecolor{color4}{HTML}{C73E1D}
\definecolor{color5}{HTML}{3B1F2B}
\definecolor{color6}{HTML}{95C623}

\definecolor{tokred}{HTML}{B31B1B}
\definecolor{tokredlight}{HTML}{FDDEDE}
\definecolor{altblue}{HTML}{1F60C4}
\definecolor{promptgray}{HTML}{F3F3F3}
\definecolor{responsegray}{HTML}{FAFAFA}

\newcommand{\tokdiff}[2]{%
  \colorbox{tokredlight}{\color{tokred}\texttt{#1}}%
  {\color{altblue}\textsuperscript{\texttt{#2}}}%
}

\newtcolorbox{promptframe}{
  colback=promptgray, colframe=black!40, boxrule=0.4pt,
  fontupper=\small\sffamily, left=4pt, right=4pt, top=2pt, bottom=2pt,
  title={\footnotesize\bfseries Prompt}, coltitle=black,
  attach boxed title to top left={xshift=4pt,yshift=-2mm},
  boxed title style={colback=promptgray, colframe=black!40, boxrule=0.4pt}
}

\newtcolorbox{responseframe}{
  colback=responsegray, colframe=black!20, boxrule=0.3pt,
  fontupper=\scriptsize\ttfamily, left=4pt, right=4pt, top=3pt, bottom=3pt,
  breakable
}

\theoremstyle{plain}
\newtheorem{theorem}{Theorem}[section]
\newtheorem{lemma}[theorem]{Lemma}

\theoremstyle{definition}
\newtheorem{definition}[theorem]{Definition}

\theoremstyle{remark}

\title{Statistically-Lossless Quantization of Large Language Models}

\author{Michael Helcig \\
ETH Zürich
\And
Eldar Kurtić \\
Red Hat AI \& ISTA
\And
Dan Alistarh \\
ISTA
}
\begin{document}

\ifcolmsubmission
\linenumbers
\fi

\maketitle

\lhead{Published as a conference paper at COLM 2026}

\begin{abstract}
Model quantization has become essential for efficient large language model deployment, yet existing approaches present clear trade-offs: methods such as GPTQ and AWQ achieve practical compression but are \emph{lossy}, while \textit{lossless} techniques preserve fidelity but lack inference acceleration. This paper explores the middle ground of \textit{statistically-lossless compression}, examining three complementary aspects of what losslessness means for quantized LLMs. First, \emph{task-lossless} compression preserves zero-shot benchmark accuracy within natural sampling variance and is achievable at aggressive bitwidths. Second, we formalize the stricter notion of \emph{distribution-lossless} compression, requiring the quantized model's next-token distribution to be practically indistinguishable from the original, and propose the \textit{Expected Acceptance Rate (EAR)}, the maximum token-agreement probability under optimal coupling, as a directly interpretable fidelity metric (e.g.\ EAR $\geq 0.99$ means 99\% agreement). Third, we prove a \textit{gamma-squared variance law} showing that symmetric quantization inflates noise variance by $\gamma^2$ relative to asymmetric, making asymmetric quantization a prerequisite for distribution-lossless fidelity but not for task-level preservation. Through \textit{SLQ}, a layer-wise non-uniform method with asymmetric quantization and wide bitwidth search, we obtain task-lossless compression at well below 4 bits per parameter (as low as 3.3 bits depending on the model), distribution-lossless compression at 5--6 bits per parameter on average, and inference speedups of $1.7$--$3.7\times$ compared to FP16 using optimized kernels. Source code is available at \url{https://github.com/IST-DASLab/SLQ}.
\end{abstract}


\section{Introduction}
\label{sec:introduction}

The large computational and memory costs of large language model (LLM) inference have led \textit{model quantization}, i.e. the reduction in bit-width for weights, activations, or both, to become a standard technique for efficient deployment. 
As such, over the past few years, hundreds of LLM quantization techniques have been proposed, e.g.~\citep{dettmers2022llmint8, frantar2023gptq, chee2023quip, tseng2024qtip, kim2024squeezellm, malinovskii2025higgs, lin2024awq}, exploring the trade-off curve between compression achieved and accuracy loss. 

From the accuracy/compression perspective, existing approaches can be clustered into two categories. 
The first is represented by \emph{lossy compression techniques}, such as Round-to-Nearest (RTN) quantization~\citep{dettmers2022llmint8}, llama.cpp~\citep{llama.cpp},  GPTQ~\citep{frantar2023gptq}, or AWQ~\citep{lin2024awq} which seek to map existing models to popular hardware-supported formats, such as 4-bit grouped weight quantization. These techniques have become extremely popular, and have widespread practical deployment, although they do decrease model accuracy: for example, a recent study by~\citet{Kurtic2025BF16orDeath} observed that methods such as GPTQ and AWQ can drop zero-shot accuracy on some complex tasks by 3-5\% in the case of 4-bit weight quantization. More complex vector-quantized~\citep{egiazarian2024aqlm, tseng2024quipsharp} or trellis-based~\citep{tseng2024qtip} methods can offer reduced accuracy loss at even lower bit-widths, but are hard to apply and interface with state-of-the-art inference pipelines such as vLLM~\citep{kwon2023vllm}, where support for such complex techniques is currently lacking. 

The second, more recent, category is given by methods which aim explicitly for \textit{lossless quantization} of the original weights. For instance, ZipNN~\citep{hershcovitch2025zipnn} observes that the exponents of LLM weights are highly compressible, as they do not use the entire bit-width range, reducing average size to approximately 11 bits per parameter (bpp) from 16 bpp, without accuracy loss. Other approaches include entropy-coded FP8 representations~\citep{yang2025compress} and dynamic-length BF16 compression~\citep{zhang2025dfloat11}. On the downside, lossless techniques struggle to reach high compression ratios and require complex kernel support for fast decoding at inference time. 

\noindent\textbf{Our Approach.} In this paper, we focus on the gap between these two types of approaches, and investigate \textit{Statistically Lossless LLM Quantization (SLQ)}, which we define as compression that may deviate from perfectly matching outputs under deterministic decoding, but maintains the model's output characteristics under standard sampling parameters.
Our approach is motivated by the fact that most open-weight models, such as Qwen~\citep{bai2023qwen} or Llama~\citep{dubey2024llama3}, come with recommended \textit{stochastic} sampling parameters (e.g.\ temperature $= 0.7$ and $\textnormal{top}_k = 20$ for Qwen3), under which even \textit{unquantized models' outputs vary} by a relative ${\sim}1\%$ on zero-shot or user evaluations.
The question we ask is: \textbf{Is it possible to near-losslessly compress LLMs such that their outputs \textit{statistically} match the original?} 

We examine this question along two complementary axes: 
\begin{itemize}
    \item ~\emph{Task-lossless} (TL) quantization, where zero-shot benchmark accuracy is preserved within sampling variance. 
    \item ~\emph{Distribution-lossless} (DL) quantization, a stricter criterion where the quantized model's \textit{output distribution} is practically indistinguishable from the original. 
\end{itemize}

Overall, we show that statistically lossless quantization matches realistic deployment scenarios while allowing significant gains beyond lossless compression and inference acceleration.
In addition, one key technical finding in the context of SLQ is the role of \textit{asymmetric quantization}, i.e. the presence of a non-trivial zero-point parameter for the quantization grid, as a necessary design choice for achieving Distribution-lossless (DL) fidelity. 

\begin{itemize}
    \item We formalize the distinction between TL and DL compression for LLMs and propose the \textit{expected acceptance rate (EAR)} metric, measuring the maximum token-agreement probability between original and quantized distributions under optimal coupling, as an interpretable output fidelity metric complementary to standard KL-divergence.

    \item We prove an error gap between symmetric and asymmetric quantization grids. For a weight distribution on $[L, U]$ we define the \emph{centering inefficiency} $\gamma = 2M/R$, with dynamic range $R = U - L$ and $M = \max(|L|, |U|)$. Under standard assumptions~\citep{bennett1948spectra}, we prove a  \textit{gamma-squared variance law} for symmetric grids, saying that they inflate output variance by a factor of $\gamma^2$ relative to asymmetric quantization. We are the first to formally connect the asymmetric-vs-symmetric design choice to token-level decision flips, showing that asymmetric quantization greatly improves task-level accuracy.

    \item We present \textit{SLQ}, a non-uniform scalar quantization pipeline using asymmetric quantization across bitwidths $\{2,3,4,5,6,7,8\}$, with multi-bitwidth Shapley sensitivity estimation and ILP-based allocation. For DL targets, the search operates directly on EAR constraints from calibration data. For TL targets, we introduce a \textit{single-point calibration} procedure that exploits the linear KL-recovery relationship to derive bitwidth requirements from one benchmark measurement. Because allocation follows a fidelity target, SLQ is \emph{Pareto-dominant} over uniform quantizers in both directions: it saves bits where a uniform grid overshoots (TL at \textbf{3.59} vs.\ 4 bpp, Llama-3.3-70B) and spends them where one falls short (QTIP misses TL, Llama-3.1-8B). 

    \item TL compression is achievable as low as \textbf{3.3} bpp, while DL requires \textbf{5--6 bpp}, corresponding to EAR $\geq 0.99$. Humming kernels~\citep{humming2025} serve all integer bitwidths efficiently on NVIDIA GPUs, and end-to-end vLLM serving yields $1.7$--$3.7\times$ throughput over BF16, consistently outperforming FP8.
\end{itemize}
\noindent\textbf{Related Work.}
Significant prior work focused on  \emph{weight-only} post-training quantization, from popular practical tooling (e.g., bitsandbytes~\citep{dettmers2022llmint8}) to algorithmic methods such as GPTQ~\citep{frantar2023gptq}, AWQ~\citep{lin2024awq}, SqueezeLLM~\citep{kim2024squeezellm}, SpQR~\citep{dettmers2024spqr}, OmniQuant~\citep{Shao2023OmniQuant}, HQQ~\citep{badri2023hqq}, QuIP and its follow-ups~\citep{chee2023quip,tseng2024quipsharp}, additive/vector-quantization approaches (AQLM)\citep{egiazarian2024aqlm, van2024gptvq}, as well as more recent trellis-based methods such as QTIP~\citep{tseng2024qtip}, and YAQA~\citep{tseng2025yaqa}. A growing line of work targets \emph{weight-and-activation} quantization via transformations and kernel support, e.g., QuaRot~\citep{ashkboos2024quarot} and SpinQuant~\citep{liu2024spinquant}.
Most techniques target fixed low-bit operating points (e.g., W4 or W4A4), and therefore primarily study the classic compression--accuracy trade-off rather than focusing on the compression achievable in the \emph{near-lossless} regime which is the focus of this paper. Among prior work, only SpQR and YAQA explicitly consider this frontier; nevertheless, SpQR falls well short of the strict accuracy targets we consider, while YAQA relies on substantially heavier system support (e.g., full-model-KL-aware Hessian sketching and integration with specialized quantizers/kernels such as QTIP). In addition, as we show experimentally, YAQA still falls short of the distribution lossless (DL) objective. Complementary to compression methods themselves, \citet{kubler2026degradations} address how to \emph{decide} whether an optimized model has degraded at all, proposing a McNemar-based hypothesis test that confronts per-sample outcomes rather than aggregated task scores, and thereby separates genuine degradation from evaluation noise. This is close in spirit to our task-lossless criterion: where they provide a test for detecting degradation in a given model, we turn the corresponding fidelity target into the objective that drives the search for the minimum bitwidth.

There has also recently been work on \textit{non-uniform precision allocation} for LLMs: for instance, HIGGS~\citep{malinovskii2025higgs} leverages linearity-style decompositions to push accuracy at low bitwidths;  EvoPress~\citep{sieberling2025evopress} applies evolutionary search for dynamic compression; OWL applies layer-wise non-uniformity via outlier-guided layer-wise sparsity choices~\citep{yin2023outlier}; and interaction-aware mixed-precision methods (e.g., IMPQ) incorporate cross-layer effects in allocation~\citep{zhao2025impq}. 
Our work is complementary: rather than optimizing within the standard lossy region, we specifically focus on the \emph{near-lossless} range where very small distributional shifts  matter. We show for the first time that reaching this regime requires (i) asymmetric quantization and (ii) searching across a \emph{wide} set of integer bitwidths  rather than a small set of precisions studied by prior work.

A separate line of work targets \emph{lossless} (bit-exact) compression of model weights. ZipNN~\citep{hershcovitch2025zipnn} exploits redundancy in weight representations to achieve lossless compression and efficient (de)compression. Yet, it does not provide inference-time acceleration since weights must be decoded prior to execution: their method is designed to reduce weight storage. \citet{yang2025compress} study exponent concentration and propose an entropy-coded \emph{FP8}-centric representation (ECF8) with GPU-optimized decoding, reporting up to 26.9\% memory savings and throughput improvements under fixed memory budgets; however, the approach is lossless with respect to an FP8 baseline and therefore applies most directly when models are stored/served in FP8. Finally, DFloat11~\citep{zhang2025dfloat11} introduces dynamic-length, lossless compression of BF16 weights with online GPU decompression for inference under tight memory constraints. These works operate strictly in the lossless regime; for this, they require higher effective bitrates than our work.

\section{Motivation}
\label{sec:motivation}

\begin{figure}[t!]
  \centering
  \begin{subfigure}[t]{0.49\linewidth}
    \centering
    \includegraphics[width=\linewidth]{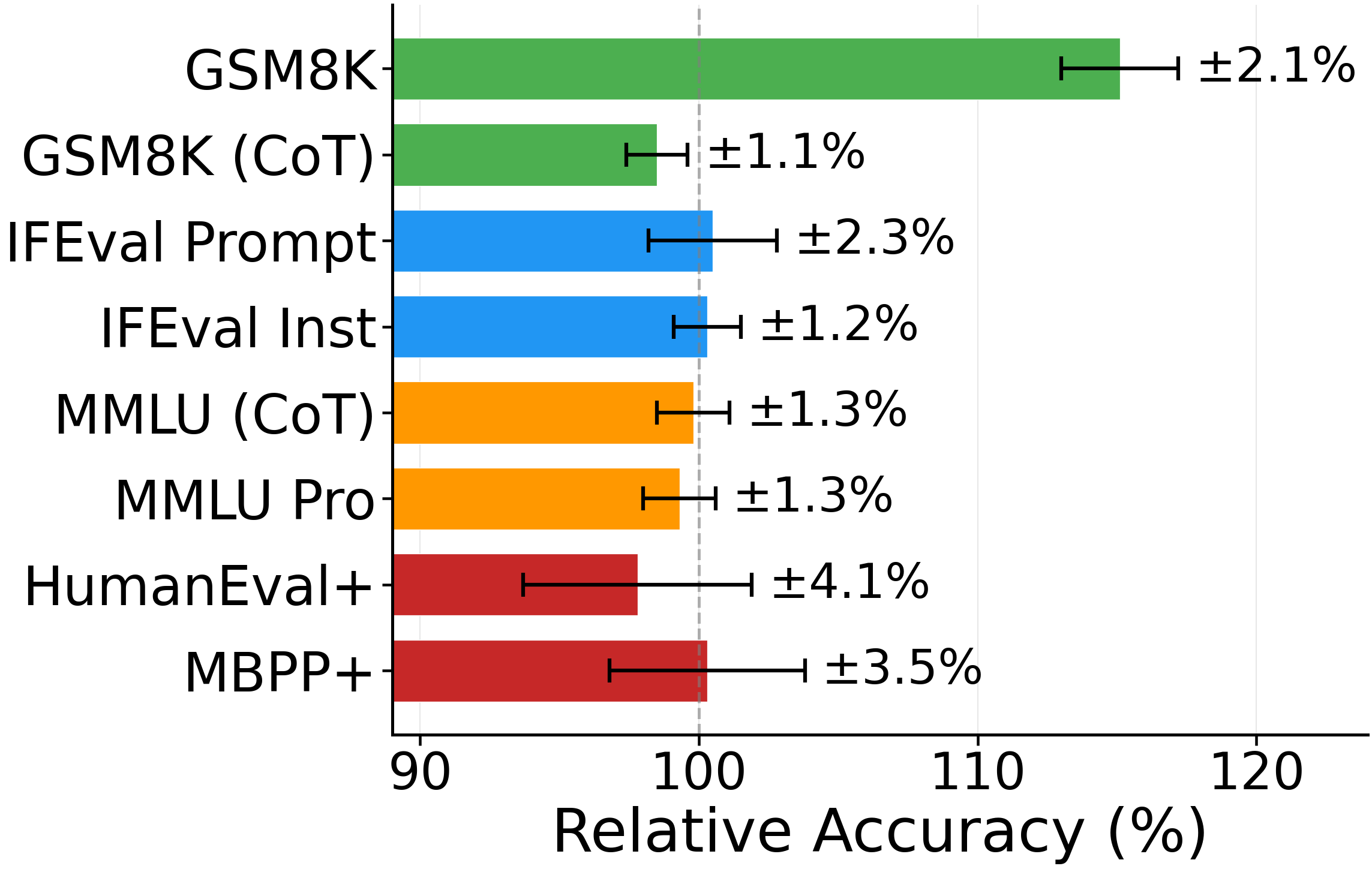}
    \caption{Sampling variance of \emph{uncompressed} Qwen3-8B.}
    \label{fig:uncompressed_variance}
  \end{subfigure}
  \hfill
  \begin{subfigure}[t]{0.49\linewidth}
    \centering
    \includegraphics[width=\linewidth]{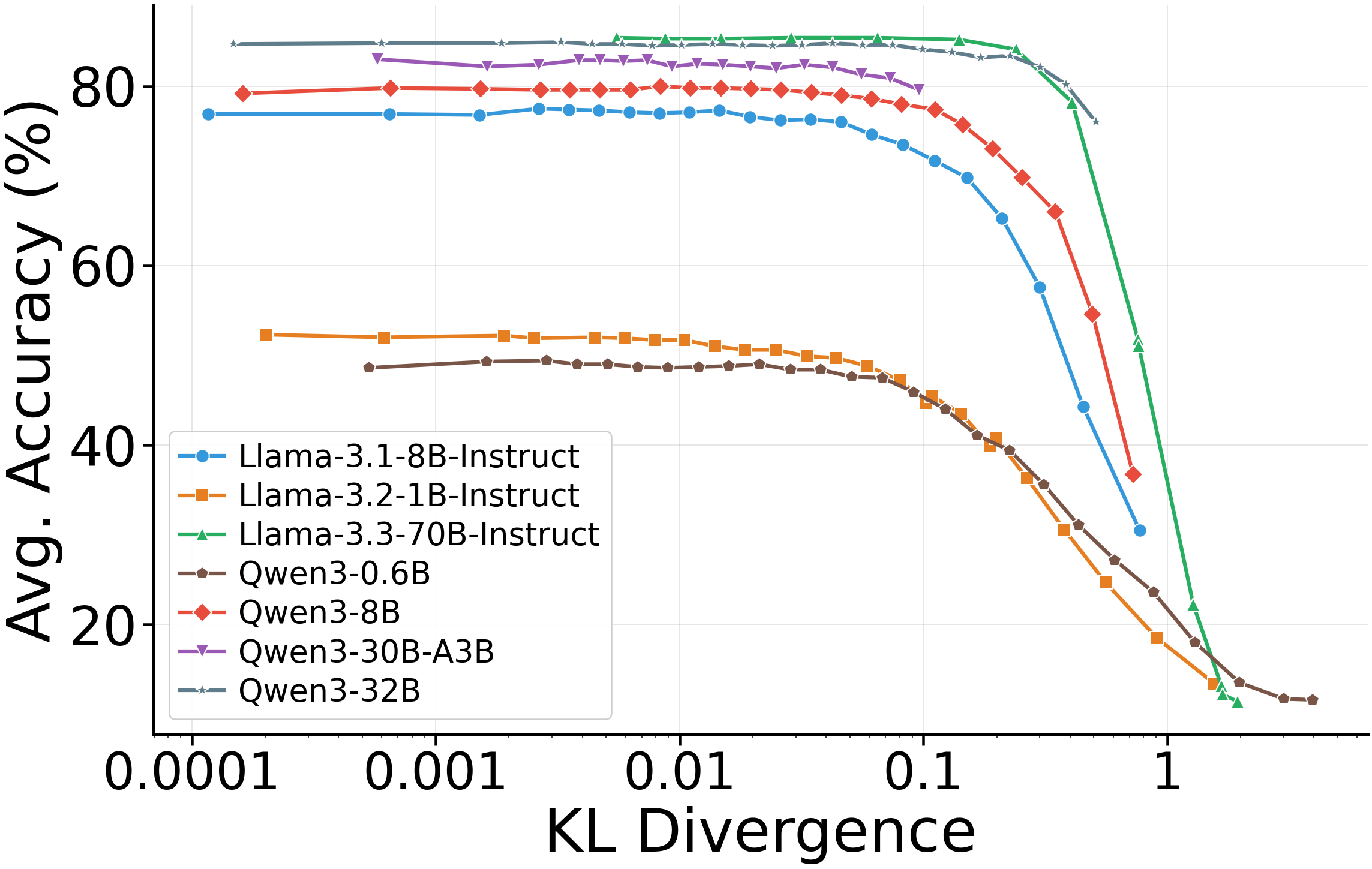}
    \caption{KL divergence vs.\ accuracy (log scale).}
    \label{fig:kl_acc_all}
  \end{subfigure}
  \caption{(a)~Run-to-run variance under stochastic sampling (dotted = greedy). (b)~Accuracy vs.\ KL divergence across seven models; a linear-scale zoom into the near-lossless regime is given in Figure~\ref{fig:kl_acc_linear}, and the comprehensive per-task analysis in Appendix~\ref{app:kl_acc}.}
  \label{fig:motivation}
\end{figure}

Most open-weight LLMs are deployed with \textit{stochastic decoding}, so even \emph{uncompressed} models exhibit measurable run-to-run variance on zero-shot evaluations (Figure~\ref{fig:motivation}, left). A quantized model that stays within this natural variance is, for all practical purposes, lossless.
Figure~\ref{fig:kl_acc_all} shows that in the near-lossless regime, zero-shot accuracy over standard tasks degrades approximately linearly with the KL divergence between the original and compressed models (Figure~\ref{fig:kl_acc_linear}, Appendix~\ref{app:kl_acc}, zooms in on a linear scale). This linearity means a single reference point is sufficient to anchor the compression search, without resorting to arbitrary KL thresholds; we formalize this as a single-point calibration procedure in Section~\ref{sec:solver}.

Uniform quantization around 4 bits per parameter (bpp) has become the standard target for memory-constrained LLM deployment~\citep{frantar2023gptq, lin2024awq, Kurtic2025BF16orDeath}. 
However, this target is arbitrary from an accuracy perspective: both GPTQ~\citep{frantar2023gptq} and QTIP~\citep{tseng2024qtip} incur significant KL divergence at ${\sim}$4\,bpp on Llama-3.1-8B-Instruct~\citep{dubey2024llama3}, while our non-uniform allocation achieves $\text{KL} = 0.004$ at EAR $0.994$ and 5.59\,bpp (Table~\ref{tab:kl-comparison}, Appendix).
More generally, different models and layers have different sensitivities, so any fixed uniform bitwidth is either too aggressive or wasteful. Our approach inverts the problem: given a fidelity target, find the minimum non-uniform bitwidth that achieves it. We validate this across models in Section~\ref{sec:accuracy}.

\section{Method}
\label{sec:methods}

We focus on obtaining near-lossless quantized models via \textit{layer-wise non-uniform scalar quantization}, chosen for its broad support across GPUs~\citep{frantar2024marlin, frantar2023gptq, lin2024awq} and CPUs~\citep{llama.cpp, pegolotti2023qigen, Ma2024BitNetb158}; the approach should be valid for more complex quantized representations (e.g., vector quantization) as well. 
Similarly, while we focus on GPTQ as our main quantization method, our approach could work with arbitrary other methods as well. 
Non-uniform allocation is motivated by the observation that different layers exhibit different sensitivities to quantization~\citep{dong2019hawq, kim2024squeezellm}. Given this target, we first show theoretically that asymmetric quantization is essential for distribution-lossless recovery (Section~\ref{sec:asym-theory}). We will then describe the SLQ pipeline: output metrics (Section~\ref{sec:formulation}), sensitivity estimation (Section~\ref{sec:sensitivity}), and bitwidth allocation (Section~\ref{sec:solver}).

\subsection{Asymmetric vs.\ Symmetric Quantization}
\label{sec:asym-theory}

\noindent\textbf{Setup.} Consider a weight tensor with values in $[L, U]$. Let $R = U - L$ denote the dynamic range, $M = \max(|L|, |U|)$ the maximum absolute magnitude, and $n$ the number of quantization levels. Symmetric quantization anchors the grid at zero, spanning $[-M, M]$ where $M = \max(|L|, |U|)$. Asymmetric quantization uses a zeropoint to shift the grid, spanning exactly $[L, U]$. Intuitively, symmetric quantization's zero-anchored grid wastes representational capacity when weight distributions are offset from zero, yielding larger per-layer output errors that increase the probability of argmax changes. We provide an overview of the formalization below, showing the impact is \emph{quadratic} in the centering inefficiency. The full argument is in Appendix~\ref{app:bussgang}.

\begin{definition}[Centering inefficiency]
\label{def:centering-inefficiency}
The centering inefficiency is $\gamma = 2M/R$.
\end{definition}

When weights are symmetric about zero, $\gamma = 1$. For asymmetric distributions (typical in LLMs), $\gamma > 1$. For example, weights in $[-0.8, 1.2]$ yield $\gamma = 1.2$: the symmetric grid spans $[-1.2, 1.2]$, wasting 20\% of its capacity on the interval $[-1.2, -0.8]$ where no weights exist.

\noindent\textbf{From step sizes to noise variance.} Symmetric quantization has step size $\Delta_{\text{sym}} = 2M/(n{-}1)$, while asymmetric has $\Delta_{\text{asym}} = R/(n{-}1)$. Their ratio is $\Delta_{\text{sym}} / \Delta_{\text{asym}} = 2M/R = \gamma$, independent of the number of levels. Under the high-rate approximation~\citep{bennett1948spectra}, quantization error is approximately uniform within each bin, with variance $\sigma^2 = \Delta^2/12$. Since $\Delta_{\text{sym}} = \gamma \cdot \Delta_{\text{asym}}$, the variance inherits a \emph{quadratic} dependence:

\begin{lemma}[Gamma-Squared Variance Law]
\label{thm:gamma-squared}
The quantization noise variance under symmetric quantization scales quadratically with the centering inefficiency relative to asymmetric quantization:
\begin{equation}
    \sigma_{\mathrm{sym}}^2 = \gamma^2 \, \sigma_{\mathrm{asym}}^2.
\end{equation}
\end{lemma}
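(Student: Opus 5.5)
The argument reduces the variance comparison to a comparison of step sizes, working entirely inside the high-rate (Bennett) model already invoked in the setup. I will treat the same weight tensor, supported on $[L,U]$, and quantize it with $n$ levels under both schemes. The symmetric grid is uniform on $[-M,M]$ with spacing $\Delta_{\mathrm{sym}} = 2M/(n-1)$. The asymmetric grid is uniform on $[L,U]$ with spacing $\Delta_{\mathrm{asym}} = R/(n-1)$.

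First I check that both grids cover the support, so that neither scheme clips. This is immediate: $[L,U]\subseteq[-M,M]$ by the definition of $M$. The asymmetric grid matches the support exactly. So every weight falls in some interior bin of its grid, and the only error is granular rounding error, with no overload error.

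Second, I apply the high-rate approximation to each grid separately. Within each bin, the weight density is approximately constant, so the rounding error is approximately uniform on $[-\Delta/2,\Delta/2]$. Its conditional variance in any occupied bin is therefore $\Delta^2/12$. Averaging over bins, with weights given by the probability mass of each bin, leaves this unchanged because the grid is uniform. This gives $\sigma^2_{\mathrm{sym}}=\Delta_{\mathrm{sym}}^2/12$ and $\sigma^2_{\mathrm{asym}}=\Delta_{\mathrm{asym}}^2/12$.

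For the symmetric grid, the bins on $[-M,L)$ carry no mass, so they contribute nothing. This is exactly the wasted capacity that the centering inefficiency $\gamma$ of Definition~\ref{def:centering-inefficiency} measures.

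Third, I take the ratio. From the step sizes, $\Delta_{\mathrm{sym}}/\Delta_{\mathrm{asym}} = 2M/R = \gamma$. Since both variances are $\Delta^2/12$, squaring gives $\sigma^2_{\mathrm{sym}}/\sigma^2_{\mathrm{asym}}=\gamma^2$. The factor $n-1$ cancels, so the law holds for every bitwidth.

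The main obstacle is the second step. The equality is exact only under the high-rate model, so the precise claim should be stated as holding to leading order in $\Delta$. To make this explicit, I would write the error variance as $\sum_k \int_{\text{bin }k} p(w)\,(w-c_k)^2\,dw$, where $c_k$ is the center of bin $k$. I would then Taylor-expand $p$ about $c_k$ to show the total is $\Delta^2/12 + O(\Delta^3)$, assuming $p$ is Lipschitz on $[L,U]$. With this expansion, the ratio becomes $\gamma^2(1+O(\Delta))$ and converges to $\gamma^2$ as $n\to\infty$, which is the sense in which the lemma is intended.
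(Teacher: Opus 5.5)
Your proposal is correct and follows the paper's proof. The paper likewise shows $\Delta_{\mathrm{sym}}/\Delta_{\mathrm{asym}} = 2M/R = \gamma$, applies Bennett's high-rate approximation to get $\sigma^2 = \Delta^2/12$ for each grid, and squares the step-size ratio; the only difference is that you quantify the approximation (a Lipschitz-density bound giving $\Delta^2/12 + O(\Delta^3)$, hence a ratio of $\gamma^2(1+O(\Delta))$), whereas the paper treats the uniform-error model as exact, so your version makes explicit that the equality holds to leading order as $n \to \infty$.
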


\noindent\textbf{Impact on output error.} For a single linear layer $\mathbf{y} = W\mathbf{x}$, quantization produces output perturbation $\boldsymbol{\delta} = (Q(W) - W)\mathbf{x}$, with expected squared norm $\sigma^2 \|\mathbf{x}\|^2$. Since quantization is applied to identical inputs, $\|\mathbf{x}\|^2$ cancels out:
\begin{equation}
    \frac{\mathbb{E}[\|\boldsymbol{\delta}_{\mathrm{sym}}\|^2]}{\mathbb{E}[\|\boldsymbol{\delta}_{\mathrm{asym}}\|^2]} = \gamma^2.
\end{equation}
This implies that a distribution with $\gamma = 1.2$, which is only mildly asymmetric, suffers 44\% more per-layer output error under symmetric quantization. This compounds across layers and increases the probability of \emph{decision flips}: positions where quantization perturbs logits enough to change the argmax token, directly reducing EAR. The full derivation is in Appendix~\ref{app:bussgang}.

\noindent\textbf{Empirical Validation.}
\label{empirical:asym}
We now validate the $\gamma^2$ scaling on Llama-3.1-8B-Instruct (uniform 4-bit). Figures~\ref{fig:flip-by-entropy-absolute}--\ref{fig:flip-by-entropy-ratio} show symmetric quantization flips more tokens across all entropy bins, with the ratio largest at low entropy (up to $3\times$). The resulting EAR gap is clearest at the DL threshold itself: under Shapley allocation over the full bitwidth range, asymmetric quantization reaches $\text{EAR} \geq 0.99$ at 5.5\,bpp, whereas symmetric does not reach it until 7.0\,bpp, a gap of 1.5\,bpp (Appendix~\ref{app:detailed-results}). Figure~\ref{fig:layer-acceptance} confirms that the effect compounds across layers: symmetric quantization degrades more sharply in the first few sensitive layers, undermining mixed-precision schemes. On MMLU-Pro, where each answer is a single-token decision, asymmetric quantization recovers 88.4\% of FP16 accuracy versus 82.5\% for symmetric (Appendix~\ref{app:asym-empirical}).

\noindent
\begin{minipage}{0.32\linewidth}
  \centering
  \includegraphics[width=\linewidth]{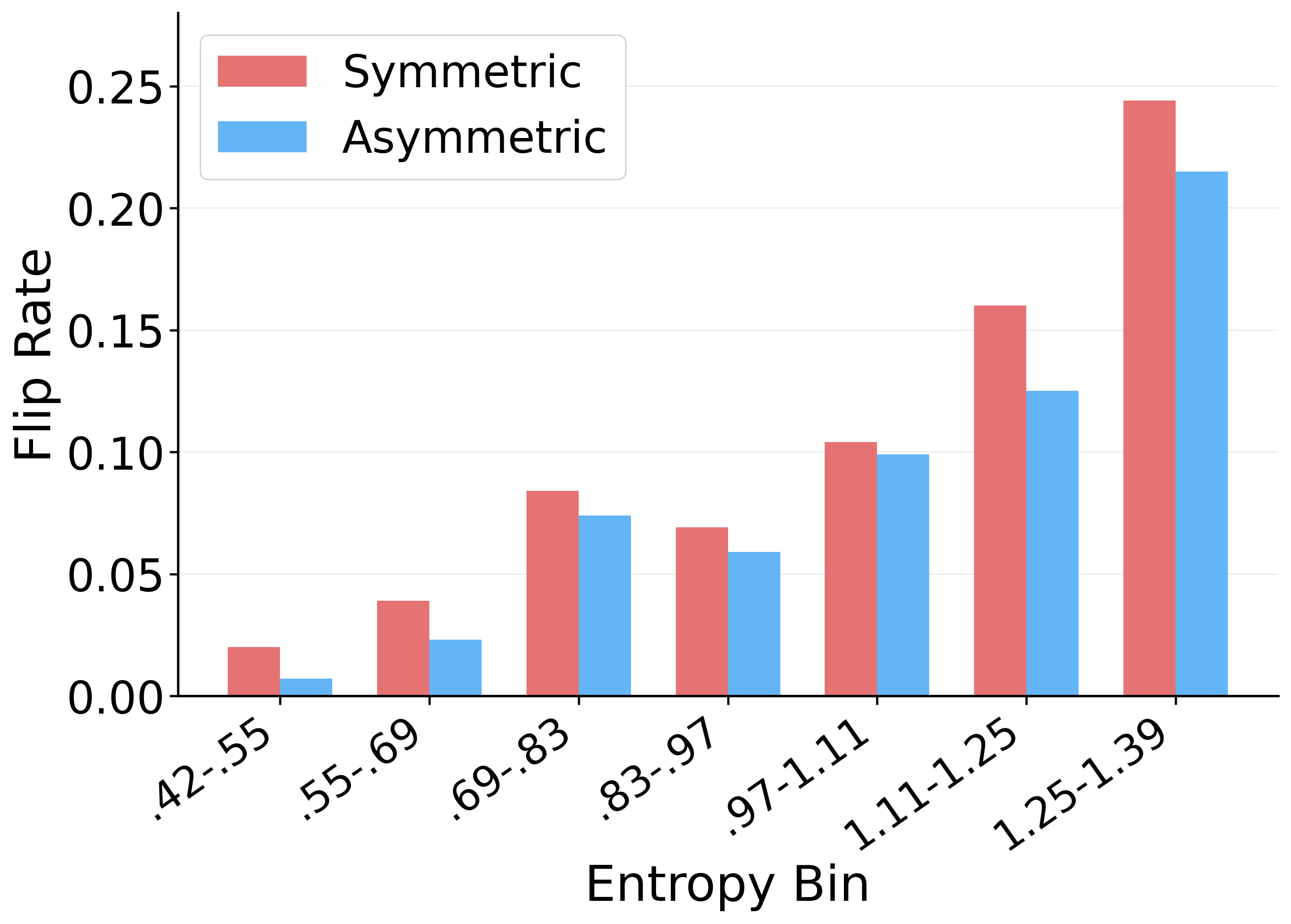}
  \captionof{figure}{Decision flip rates.}
  \label{fig:flip-by-entropy-absolute}
\end{minipage}
\hfill
\begin{minipage}{0.32\linewidth}
  \centering
  \includegraphics[width=\linewidth]{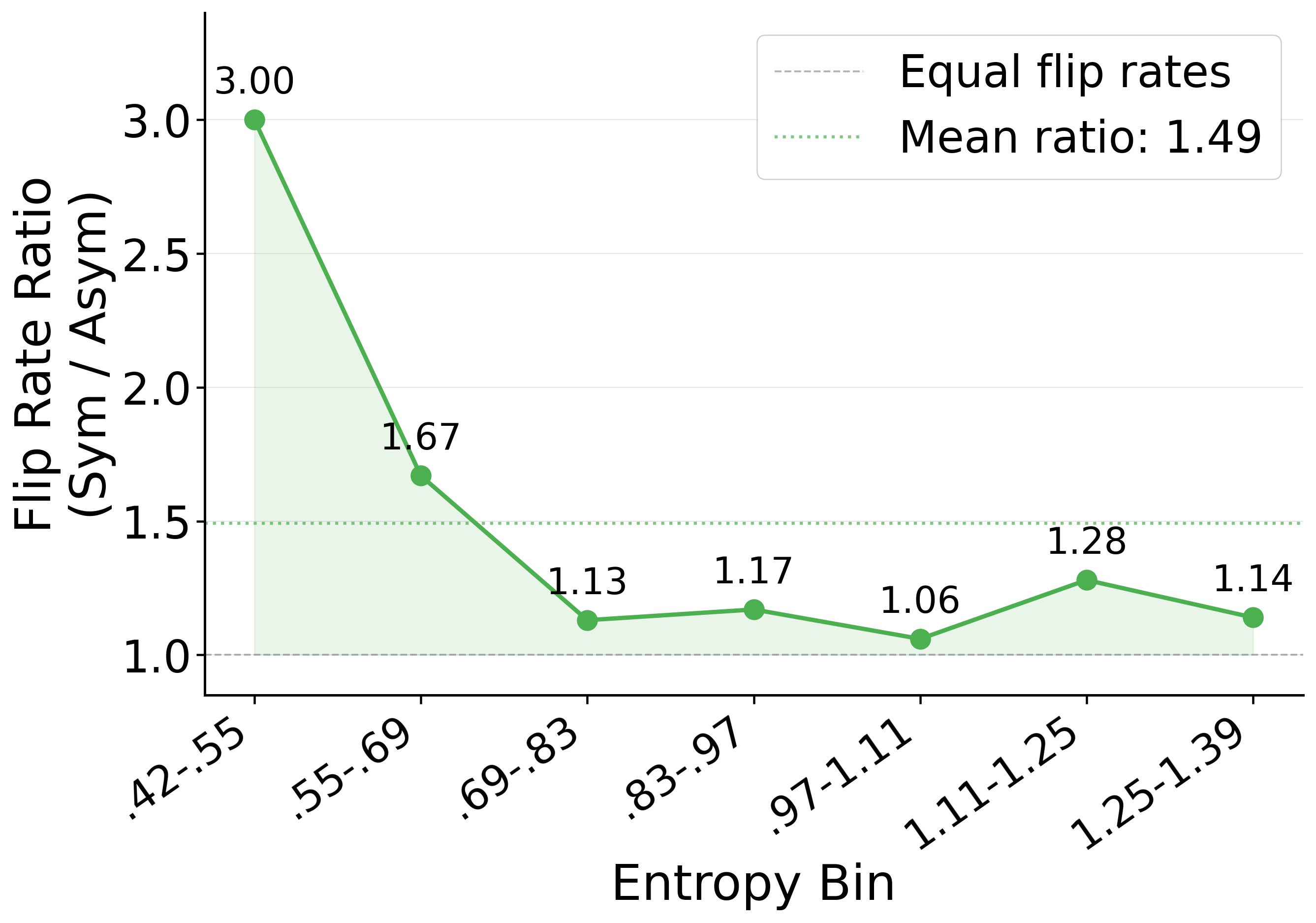}
  \captionof{figure}{Sym-to-asym ratio.}
  \label{fig:flip-by-entropy-ratio}
\end{minipage}
\hfill
\begin{minipage}{0.32\linewidth}
  \centering
  \includegraphics[width=\linewidth]{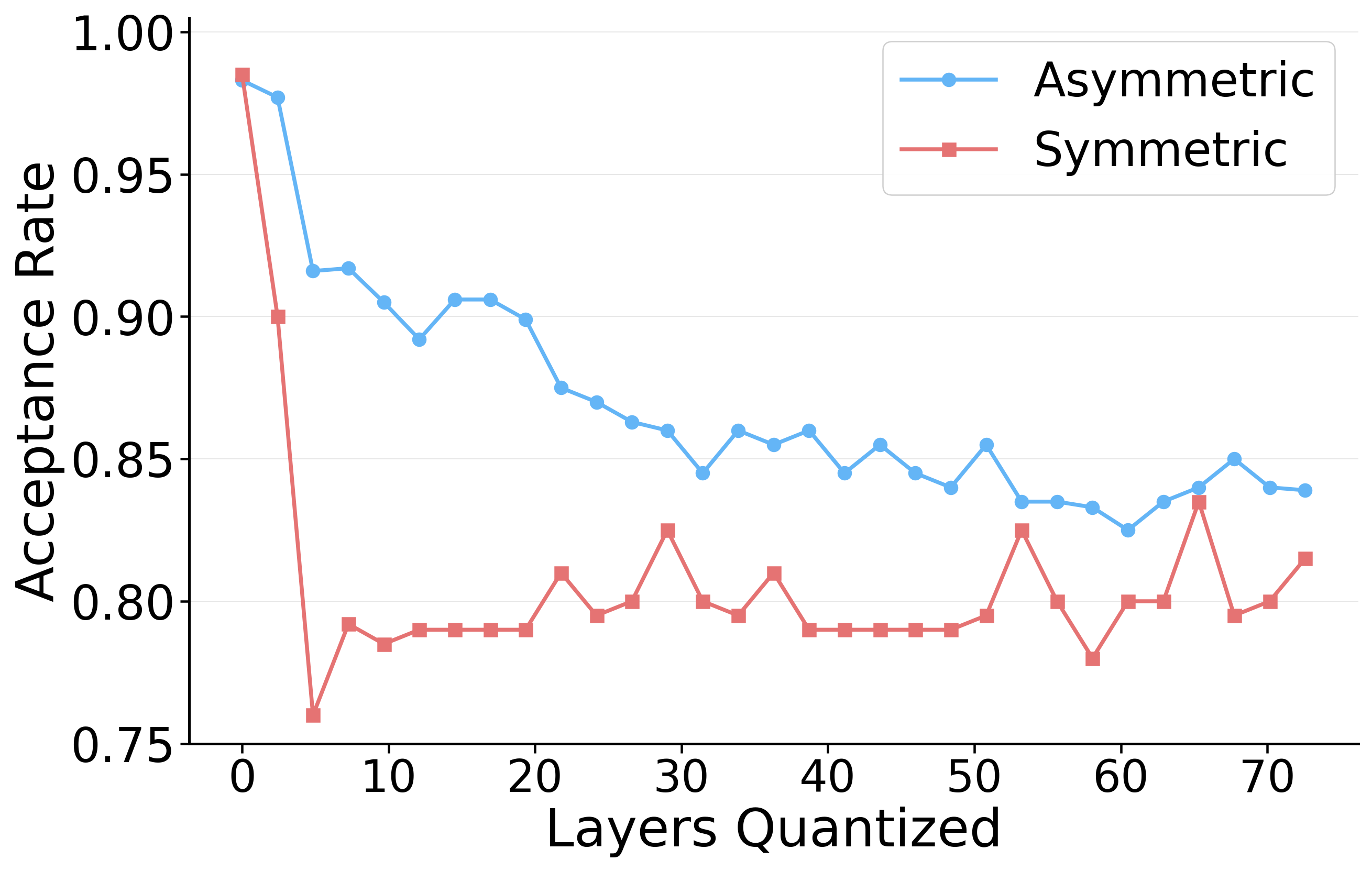}
  \captionof{figure}{Layer-by-layer.}
  \label{fig:layer-acceptance}
\end{minipage}

\vspace{0.5em}

These experiments show that the asymmetric--symmetric gap is most pronounced at the distributional level: achieving EAR $\geq 0.99$ with symmetric quantization requires up to 1.5 additional bits per parameter. For task-level accuracy, symmetric quantization preserves high average recovery, because stochastic sampling absorbs the additional token flips without measurably affecting aggregate scores. We conclude that asymmetric  quantization is therefore a prerequisite for distribution-lossless compression, but symmetric remains viable when only task-level preservation is required. 

\noindent\textbf{Relation to prior work.} That asymmetric grids reduce reconstruction error is known. The consequence at the token level has not been made explicit. The $\gamma^2$ inflation causes symmetric quantization to flip tokens that asymmetric leaves intact, and the flip-rate ratio is widest in the low-entropy bins, where the original model was most confident (Figure~\ref{fig:flip-by-entropy-ratio}). The grid geometry therefore determines whether a given bit budget can reach the DL threshold.

\subsection{Output Metrics for Losslessness}
\label{sec:formulation}

The non-uniform quantization problem is that of assigning  a bitwidth $b_\ell \in \mathcal{B} = \{2,3,4,5,6,7,8\}$ to each layer $\ell$. The average bitwidth is $\bar{b}(\mathbf{b}) = \sum_{\ell} b_\ell |W_\ell| / \sum_{\ell} |W_\ell|$. The first challenge is that of precise output metrics, computable from a small calibration set, that reliably predict when a configuration is near-lossless.

\noindent\textbf{KL-Divergence.} For $N$ calibration token positions with original and quantized distributions $p_i$, $q_i$, we compute $D_{\mathrm{KL}} = \frac{1}{N} \sum_{i} \sum_{k \in \mathcal{T}_i} p_i(k) \log \frac{p_i(k)}{q_i(k)}$, restricted to the top-$K{=}10$ tokens under $p_i$. However, KL divergence is measured in natural log units (nats), unbounded above, and asymmetric. Its magnitude is difficult to interpret without calibrated intuition for a given vocabulary and distribution shape. While Pinsker's inequality ($d_{TV} \leq \sqrt{D_{\mathrm{KL}}/2}$) bounds KL into a percentage-like quantity, this bound is loose.

\noindent\textbf{Expected Acceptance Rate (EAR).} We therefore define the \textit{Expected Acceptance Rate (EAR)} as the primary metric for \emph{distribution-lossless} compression, inspired by the token acceptance probability in speculative decoding~\citep{leviathan2023fast}. In that setting, a draft token sampled from a proposal distribution $q$ is accepted under the target distribution $p$ with probability $\sum_k \min(p(k), q(k))$; the same quantity measures the fidelity of a quantized model's output distribution to the original. Formally, this probability mass overlap equals $1 - d_{TV}(p,q)$, is bounded in $[0, 1]$, and is directly interpretable as a percentage of shared probability mass. By the coupling characterization of total variation distance, $\sum_k \min(p(k), q(k))$ equals the maximum probability that two random variables $X \sim p$ and $Y \sim q$ can be coupled to agree ($X = Y$). We average this over $N$ calibration positions:
\begin{equation}
    \mathrm{EAR} = \frac{1}{N} \sum_{i} \sum_{k=1}^{K} \min(p_i(k), q_i(k)),
\end{equation}
where both distributions are restricted to the top-$K$ tokens under $p_i$. EAR has a direct operational interpretation: $\text{EAR} = 0.99$ means that, under optimal coupling, the original and quantized models produce the same token 99\% of the time. While the underlying quantity $1 - d_{TV}$ is classical, its application as a compression fidelity criterion for quantized LLMs is, to our knowledge, new.

Restricting to top-$K$ tokens is a tight approximation: LLM output distributions are sharply peaked, so the excluded tail contributes negligible mass. EAR is typically more restrictive than KL ($\text{EAR} \geq 0.99$ generally implies $D_{\mathrm{KL}} \ll 0.01$), and is robust to tail behavior since $\min(p, q)$ clips near-zero probabilities rather than producing unbounded penalties. We use EAR as the primary constraint for bitwidth search (Section~\ref{sec:solver}), with KL as a secondary diagnostic. Both are computed simultaneously during each forward pass at no additional cost.


\subsection{Sensitivity Estimation and Bitwidth Allocation}
\label{sec:sensitivity}

Given the output metrics, we need to assign each layer a bitwidth from the budget $\mathcal{B} = \{2, 3, 4,5,6,7,8\}$. Prior mixed-precision work typically uses binary allocation (e.g., a layer is set to either 4 or 8 bits)~\citep{zhao2025impq, dong2019hawq}. We find this experimentally to be too coarse for the near-lossless regime: restricting to $\{4, 8\}$ consistently requires higher average bitwidth than the full range to meet the same quality constraint (see Section~\ref{sec:accuracy}). Since inference engines often require certain layers to share a bitwidth (e.g., fused QKV projections in vLLM), we partition layers into $M$ groups $\mathcal{G} = \{G_1, \ldots, G_M\}$.

\noindent\textbf{Sensitivity Estimation.}
\label{sec:sensitivity-estimation}
We need to estimate how each group's quantization at a given bitwidth affects the output metrics. A simple baseline is \emph{linear estimation}~\citep{malinovskii2025higgs}: model metric degradation as $\Delta_\text{KL}(\mathbf{b}) \approx \sum_{m} \alpha_m \cdot e_m^{(b_m)}$, where $e_m^{(b)}$ is the normalized reconstruction error and $\alpha_m$ is estimated via noise injection ($O(T \cdot M)$ forward passes). This is fast, but assumes independence across groups.

\noindent\textbf{Multi-Bitwidth Shapley Estimation.}
\label{sec:shapley}
Shapley-based sensitivity estimation~\citep{shapley1953value} has been applied to mixed-precision quantization~\citep{zhao2025impq}, pruning~\mbox{\citep{adamczewski2024shapleyoracle, sun2025shapley}}, and layer importance analysis~\mbox{\citep{zhang2024investigating}}. However, existing approaches consider layer-wise binary choices, switching layers between one high and one low precision, which restricts allocation to a binary choice and would thus lead to higher average bitwidths (Section~\ref{sec:accuracy} and Appendix~\ref{app:detailed-results}). 

We extend this to support the full bitwidth range by running a \emph{separate} binary game for each target bitwidth $b^* \in \mathcal{B} \setminus \{b_\text{max}\}$. In each game, we sample $P$ random permutations, start with all groups at $b_\text{max}$, and sequentially switch each group to $b^*$, recording the marginal metric change. The Shapley value $\phi_m^{(b^*)}$ for group $G_m$ is the average marginal contribution across permutations. Across all $|\mathcal{B}|-1$ games, this yields a full sensitivity database $\phi_m^{(b)}$ for every group-bitwidth pair. The games are independent and can run in parallel; each forward pass computes KL and EAR simultaneously. The total cost is $O(P \cdot M \cdot |\mathcal{B}|)$ forward passes. The full procedure is given in Algorithm~\ref{alg:multi-shapley} (Appendix~\ref{app:shapley-algorithm}).

\noindent\textbf{Bitwidth Allocation.}
\label{sec:solver}
Given the sensitivity database $c_{m,b}$ from either estimation method, we solve a multiple-choice knapsack problem via integer linear programming (ILP): assign each group $G_m$ a bitwidth $b$ to minimize total predicted degradation $\sum_{m,b} x_{m,b} \cdot c_{m,b}$ subject to a bitwidth budget. We then binary-search over the budget to find the minimum average bitwidth satisfying a quality constraint. The two compression targets use different constraints, described below. The same framework extends to joint weight-and-activation quantization by quantizing activations in the sensitivity estimates and allocation (Appendix~\ref{app:detailed-results}). The sensitivity database is computed once, and each ILP solve then takes milliseconds, so the binary search adds little cost. SLQ is in principle solver-agnostic: more accurate alternatives such as gradient-based search~\citep{helcig2026rco} are slower, but remain fast enough to be practical. Details on metric prediction and an alternative evolutionary search appear in Appendices~\ref{app:metric_prediction} and~\ref{app:evo}. Appendix~\ref{app:sensitivity-plots} visualizes the per-layer sensitivity landscape and resulting bitwidth assignments.

\noindent\textbf{Distribution-lossless search.} The sensitivity database directly predicts EAR and KL for any candidate configuration (Appendix~\ref{app:metric_prediction}). We set a target such as $\text{EAR} \geq 0.99$ and binary-search over the bitwidth budget until the predicted EAR meets the constraint. The entire search operates on calibration-set metrics and requires no downstream benchmark evaluation, and is stable across calibration seeds: repeated runs agree to within ${\pm}0.0005$ EAR.

\noindent\textbf{Task-lossless search via single-point calibration.} For task-lossless compression, the goal is to find the minimum bitwidth that preserves a target recovery rate (e.g., ${\geq}\,99\%$ of BF16 accuracy) across all benchmarks. We exploit the approximate linearity between KL divergence and task recovery established in Section~\ref{sec:motivation} (Figure~\ref{fig:kl_acc_all}):
\begin{equation}
    \text{recovery} \approx 1 - \alpha \cdot D_{\mathrm{KL}}.
    \label{eq:linear-recovery}
\end{equation}
This linear regime holds reliably for the high-bitwidth configurations relevant to near-lossless compression, where the perturbation is small enough that accuracy degrades smoothly with distributional shift. We observe this consistently across six model families and five task categories (Appendix~\ref{app:kl_acc}): within the near-lossless range, recovery is well-approximated by a single slope, while at more aggressive compression the relationship becomes non-linear and task-dependent.

The intercept is known ($D_{\mathrm{KL}} {=} 0$ implies $\text{recovery} {=} 1$), so a single calibration point determines the slope~$\alpha$. We choose a uniform bitwidth where the model shows meaningful degradation (e.g., uniform 4-bit), measure its actual KL divergence on calibration data, and evaluate benchmarks to obtain recovery. This yields $\alpha = (1 - \text{recovery}) / D_{\mathrm{KL}}^{\text{cal}}$, and inverting gives the KL threshold for any target recovery: $D_{\mathrm{KL}}^{\text{thresh}} = (1 - \text{target recovery}) / \alpha$.

Since Shapley-predicted KL may differ from measured KL by a systematic factor, we compute a calibration ratio $\rho = D_{\mathrm{KL}}^{\text{actual}} / D_{\mathrm{KL}}^{\text{predicted}}$ at the calibration point and apply it to all subsequent predictions: $\hat{D}_{\mathrm{KL}} = \rho \cdot D_{\mathrm{KL}}^{\text{predicted}}$. The binary search then checks $\hat{D}_{\mathrm{KL}} \leq D_{\mathrm{KL}}^{\text{thresh}}$ at each candidate bitwidth, requiring no additional forward passes. The total cost is one forward pass to measure KL plus two benchmark runs (BF16 baseline and calibration configuration); the search itself is arithmetic on the sensitivity database. The full procedure is given in Algorithm~\ref{alg:tl-search} (Appendix~\ref{app:tl-calibration}).

\noindent\textbf{Guardrail against out-of-regime anchors.} Since Equation~\ref{eq:linear-recovery} and $\rho$ are fitted from a single point, an anchor outside the near-lossless regime would extrapolate a poor threshold. We therefore add a held-out check: once the search converges we measure the actual KL and reject the configuration if the actual-to-predicted ratio diverges from $\rho$ by more than $2\times$, re-anchoring elsewhere. This costs one forward pass; within the regime $\alpha$ is nearly constant, so 4-bit and 6-bit anchors agree, while 3-bit is not recommended.

\section{Experiments}
\label{sec:experiments}

We evaluate SLQ for weight-only quantization on four models: Qwen3-8B, Qwen3-32B, Qwen3.5-27B~\citep{bai2023qwen}, and Llama-3.3-70B-Instruct~\citep{dubey2024llama3}, and on the Qwen3.5-35B-A3B mixture-of-experts model (Appendix~\ref{app:moe}). 
We quantize layers using the standard GPTQ technique.
Table~\ref{tab:main-accuracy} reports per-benchmark scores and average recovery rate (Avg Rec.) relative to the BF16 baseline under non-greedy decoding, averaged over 3 seeds; values of 1.0 indicate exact baseline recovery. Each model block shows SLQ-DL and SLQ-TL configurations alongside uniform W4A16 or GPTQ-Int4 references. We report average bits per parameter, accounting for all quantization overheads (Appendix~\ref{app:actual_bitwidths}).

\noindent\textbf{Setup.} Following our prior analysis, all models use asymmetric integer group quantization (group size 128, bitwidths $\{2,3,4,5,6,7,8\}$) with Shapley-based per-layer allocation (Section~\ref{sec:sensitivity}) using 512 calibration samples; DL targets use $\text{EAR} \geq 0.985$--$0.99$ depending on the model. The uniform W4A16 baselines use GPTQ~\citep{frantar2023gptq} with tuned hyperparameters; SLQ is orthogonal to the choice of per-layer quantizer and instead provides a principled method for finding the minimum-bitwidth non-uniform configuration that meets a given fidelity target. Quantized weights are served with Humming kernels~\citep{humming2025} in vLLM~\citep{kwon2023vllm} (MARLIN-based~\citep{frantar2024marlin}, 4--8 bit; Appendix~\ref{app:speedups}). Zero-shot accuracy is measured on seven benchmarks: reasoning (ARC-Challenge, GSM8K, GSM8K-CoT), instruction following (IFEval), and knowledge (MMLU-CoT, MMLU-Pro), under non-greedy decoding with model-recommended sampling parameters, averaged over 3 seeds. End-to-end throughput is measured on NVIDIA L40s GPUs, running the Humming kernels in vLLM, using ShareGPT and Reasoning workloads; we report output tokens per second per GPU (TPS/GPU) to normalize across tensor-parallelism configurations.

\subsection{Accuracy Results}
\label{sec:accuracy}

\begin{table*}[t!]
\centering
\caption{Non-greedy accuracy and recovery rates. W4A16 shown for reference. Bold only for visibility. GPTQ-Int4 is the official 4-bit quantized Qwen3.5-27B. \protect\footnotemark}
\label{tab:main-accuracy}
\small
\resizebox{\textwidth}{!}{%
\begin{tabular}{@{}lccccccccc@{}}
\toprule
Model & Bits & ARC & GSM8K & GSM8K & IFEval & IFEval & MMLU & MMLU & Avg \\
 & & Chal. & & (CoT) & Prompt & Inst. & CoT & Pro & Rec. \\
\midrule
\multicolumn{10}{l}{\textit{Qwen3-8B} (non-greedy, temp=0.6, top\_p=0.95, top\_k=20, 3 seeds)} \\
\midrule
Baseline (BF16) & 16.0 & 90.64 & 79.15 & 90.75 & 82.44 & 88.13 & 71.23 & 63.42 & 1.000 \\
SLQ-DL & 5.70 & 90.30 & 80.44 & 90.45 & 80.96 & 87.17 & 68.95 & 63.45 & 0.993 \\
SLQ-TL & 4.72 & 90.13 & 80.43 & 90.60 & 82.99 & 88.49 & 71.86 & 62.92 & 1.003 \\
Uniform-4 & 4.16 & 89.27 & 80.74 & 90.22 & 82.26 & 87.77 & 60.01 & 61.80 & 0.976 \\
\midrule
\multicolumn{10}{l}{\textit{Qwen3-32B} (non-greedy, temp=0.6, top\_p=0.95, top\_k=20, 3 seeds)} \\
\midrule
Baseline (BF16) & 16.0 & 92.50 & 91.72 & 91.73 & 86.40 & 89.22 & 86.30 & 70.70 & 1.000 \\
SLQ-DL & 4.98 & 92.28 & 93.27 & 91.48 & 86.53 & 89.82 & 87.32 & 70.12 & 1.003 \\
SLQ-TL & 3.59 & 92.30 & 93.33 & 91.22 & 85.89 & 88.90 & 86.01 & 68.92 & 0.996 \\
Uniform-4 & 4.16 & 91.88 & 91.39 & 90.90 & 86.01 & 89.12 & 85.44 & 68.90 & 0.991 \\
\midrule
\multicolumn{10}{l}{\textit{Qwen3.5-27B} (non-greedy, temp=1.0, top\_p=0.95, top\_k=20, 3 seeds)} \\
\midrule
Baseline (BF16) & 16.0 & 96.88 & 96.84 & 95.73 & 86.08 & 90.49 & 88.16 & 77.87 & 1.000 \\
SLQ-DL & 5.19 & 95.28 & 96.78 & 95.45 & 85.77 & 90.17 & 88.47 & 78.43 & 0.997 \\
SLQ-TL & 3.30 & 96.22 & 95.75 & 94.84 & 85.21 & 89.81 & 87.53 & 76.64 & 0.991 \\
GPTQ-Int4 & 4.1 & 96.65 & 96.66 & 95.56 & 86.41 & 90.65 & 88.23 & 77.68 & 0.999 \\
\midrule
\multicolumn{10}{l}{\textit{Llama-3.3-70B-Instruct} (non-greedy, temp=0.6, top\_p=0.9, 3 seeds)} \\
\midrule
Baseline (BF16) & 16.0 & 93.45 & 93.38 & 94.52 & 88.78 & 92.37 & 86.39 & 70.78 & 1.000 \\
SLQ-DL & 6.55 & 93.48 & 93.55 & 94.80 & 89.52 & 92.85 & 86.26 & 70.52 & 0.999 \\
SLQ-TL & 3.59 & 92.70 & 93.33 & 93.48 & 89.28 & 92.69 & 85.19 & 68.25 & 0.991 \\
Uniform-4 & 4.16 & 93.65 & 91.10 & 93.71 & 89.34 & 92.57 & 85.94 & 70.08 & 0.998 \\
\bottomrule
\end{tabular}}%
\end{table*}
\footnotetext{Bits averaged over subset (MLP + linear attention input/output projections); remaining layers are BF16. Following Qwen's convention; \href{https://huggingface.co/Qwen/Qwen3.5-27B-GPTQ-Int4}{Qwen3.5-27B-GPTQ-Int4} labeled \textit{Int4} under the same scheme.}

\noindent\textbf{Zero-shot accuracy as a fidelity metric.} In the near-lossless regime, zero-shot benchmarks are a blunt instrument: models with meaningfully different distributional fidelity can score within each other's noise bands (Figure~\ref{fig:uncompressed_variance}). This is why we use EAR/KL as the primary search objective for DL and a recovery threshold for TL, rather than optimizing benchmark scores directly.

\noindent\textbf{Distribution-lossless results.} The SLQ-DL rows show configurations optimized for distributional fidelity (EAR $\geq$ $0.99$). All four models achieve ${\geq}\,99.3\%$ average recovery at 4.98--6.55 bits, within the sampling variance of the baseline on every benchmark. Qwen3-32B compresses most aggressively (4.98 bits, 100.1\% recovery), while Llama-3.3-70B requires 6.55 bits.

\noindent\textbf{Task-lossless at aggressive bitwidths.}
When the goal is only task-level preservation, the single-point calibration procedure (Section~\ref{sec:solver}) allows substantially more aggressive bitwidths. The SLQ-TL configurations target ${\geq}\,99\%$ average recovery and achieve it across all four models: Qwen3.5-27B at 3.30 bits, Llama-3.3-70B at 3.59 bits, Qwen3-8B at 4.72 bits, and Qwen3-32B at 3.59 bits. In every case, SLQ-TL dominates uniform W4A16 at comparable or lower bitwidths.

\noindent\textbf{Comparison to state-of-the-art quantizers.}
QTIP~\citep{tseng2024qtip} and YAQA~\citep{tseng2025yaqa} are orthogonal to SLQ, so the question is whether uniform allocation with a stronger quantizer already reaches the statistically-lossless regime. Table~\ref{tab:sota-comparison} shows it does not. On Llama-3.1-8B, QTIP misses both targets at every bitwidth tested. On Llama-3.3-70B, QTIP and YAQA at 4 bits clear the task-lossless bar, but their EAR leaves them far from the $0.99$ required for distribution-losslessness. SLQ reaches TL on that model at 3.59\,bpp and DL at 6.55\,bpp (Table~\ref{tab:main-accuracy}), and on Llama-3.1-8B attains $\text{KL} = 0.004$ at 5.59\,bpp against QTIP's $0.017$ at 4 bits. A uniform budget is set without reference to any fidelity target, so it lands on the wrong side of the boundary in both directions.

\begin{table}[t]
\centering
\caption{QTIP and YAQA evaluated by us on the released checkpoints, under the benchmark suite and sampling setup of Table~\ref{tab:main-accuracy}. Task-losslessness requires ${\geq}\,99\%$ mean recovery, distribution-losslessness EAR $\geq 0.99$; no configuration reaches the latter.}
\label{tab:sota-comparison}
\small
\begin{tabular}{@{}llcccc@{}}
\toprule
Method & Model & Bits & Task mean rec. & KL $\downarrow$ & EAR $\uparrow$ \\
\midrule
QTIP & Llama-3.1-8B  & 4-bit & 97.94\% & 0.0172 & 0.9519 \\
QTIP & Llama-3.1-8B  & 3-bit & 96.00\% & 0.0510 & 0.9165 \\
QTIP & Llama-3.3-70B & 4-bit & 99.9\%  & 0.0244 & 0.9544 \\
YAQA & Llama-3.3-70B & 4-bit & 99.5\%  & 0.0150 & 0.9622 \\
\bottomrule
\end{tabular}
\end{table}

\noindent\textbf{Comparison to non-uniform allocators.}
HIGGS~\citep{malinovskii2025higgs}, EvoPress~\citep{sieberling2025evopress} and IMPQ~\citep{zhao2025impq} maximize quality under a fixed budget, and so cannot \emph{locate} the losslessness threshold without sweeping the entire curve. Against IMPQ on Qwen3-8B, task recovery saturates for both above ${\sim}4$\,bpp while EAR separates them (Appendix~\ref{app:allocators}).

\noindent\textbf{Ablations.} The full bitwidth range $\{4,5,6,7,8\}$ is essential: on Qwen3-32B at EAR $\geq 0.99$, restricting to $\{4, 8\}$ requires 5.74 bits while the full range achieves equivalent fidelity at 4.98 bits. Shapley-based allocation outperforms linear estimation at mid-range bitwidths (5.5--6.5 bits) where decisions are tightest. Comprehensive ablations and bitwidth-range comparisons appear in Appendix~\ref{app:detailed-results}.

\noindent\textbf{Token-Level Analysis for DL.}
We perform a teacher-forced token-level comparison between base Qwen3-8B and its 5.70-bit SLQ-DL quantization: responses generated by the base model (greedy) are fed through the quantized model to record per-position top-1 disagreements, isolating quantization effects from autoregressive divergence. Across 15 diverse prompts (6{,}531 tokens), 98.1\% of tokens are identical; the 1.9\% that diverge are minor stylistic variations (synonym swaps, punctuation) with no factual errors, distributed uniformly across positions. Per-prompt statistics and annotated examples are in Appendix~\ref{app:token-divergence}.

\noindent\textbf{Thinking token inflation.}
Thinking tokens dominate reasoning model inference cost~\citep{chen2026price}. On Qwen3-8B GSM8K (1319 problems, 3 seeds), SLQ-DL introduces \textit{no} additional thinking overhead relative to BF16, whereas SLQ-TL and Uniform-4 increase thinking tokens by $+2.2\%$ and $+2.9\%$, respectively (see Appendix~\ref{app:thinking-tokens} for full details). The gap is statistically clean (paired Wilcoxon: $p = 0.86$ for SLQ-DL, $p < 10^{-5}$ for both lossy configurations), so TL and DL are separate regimes rather than points on one axis: a model can be task-lossless and still cost measurably more to serve.

\subsection{Inference Speedup}
\label{sec:speedup-results}

Table~\ref{tab:e2e-speedup} presents end-to-end throughput on L40S GPUs, running the Humming kernels~\citep{humming2025} in vLLM. The SLQ configurations provide $1.7$--$3.7\times$ throughput over BF16 at ${\geq}\,99.1\%$ recovery, consistently outperforming FP8 per GPU ($1.23\times$ on Qwen3-8B, $1.54\times$ on Qwen3-32B). Speedups come from both faster kernel execution and reduced GPU count: Qwen3-32B fits on 1 GPU instead of 2, Llama-3.3-70B on 2 instead of 4.

\begin{table}[H]
\centering
\caption{End-to-end throughput on L40s GPUs (Humming kernels in vLLM). TPS/GPU normalizes by GPU count.}
\label{tab:e2e-speedup}
\small
\begin{tabular}{@{}lcccccc@{}}
\toprule
Model & Bits & GPUs & \multicolumn{2}{c}{ShareGPT} & \multicolumn{2}{c}{Reasoning} \\
 & & & TPS/GPU & Speedup & TPS/GPU & Speedup \\
\midrule
\multicolumn{7}{l}{\textit{Qwen3-8B}} \\
BF16 & 16.0 & 1 & 630.9 & 1.00$\times$ & 544.6 & 1.00$\times$ \\
\textbf{SLQ-DL} & \textbf{5.70} & \textbf{1} & \textbf{1097.2} & \textbf{1.74$\times$} & \textbf{907.6} & \textbf{1.67$\times$} \\
\textbf{SLQ-TL} & \textbf{4.72} & \textbf{1} & \textbf{1223.9} & \textbf{1.94$\times$} & \textbf{1010.9} & \textbf{1.86$\times$} \\
FP8 & 8.0 & 1 & 894.3 & 1.42$\times$ & 748.4 & 1.37$\times$ \\
\midrule
\multicolumn{7}{l}{\textit{Qwen3-32B}} \\
BF16 & 16.0 & 2 & 148.1 & 1.00$\times$ & 135.5 & 1.00$\times$ \\
\textbf{SLQ-DL} & \textbf{4.98} & \textbf{1} & \textbf{435.5} & \textbf{2.94$\times$} & \textbf{383.5} & \textbf{2.83$\times$} \\
FP8 & 8.0 & 1 & 282.8 & 1.91$\times$ & 121.6 & 0.90$\times$ \\
\midrule
\multicolumn{7}{l}{\textit{Llama-3.3-70B-Instruct}} \\
BF16 & 16.0 & 4 & 62.8 & 1.00$\times$ & 60.8 & 1.00$\times$ \\
\textbf{SLQ-DL} & \textbf{6.55} & \textbf{2} & \textbf{139.8} & \textbf{2.23$\times$} & \textbf{129.7} & \textbf{2.13$\times$} \\
\textbf{SLQ-TL} & \textbf{3.59} & \textbf{2} & \textbf{231.1} & \textbf{3.68$\times$} & \textbf{190.9} & \textbf{3.14$\times$} \\
FP8 & 8.0 & 4 & 93.8 & 1.50$\times$ & 88.2 & 1.45$\times$ \\
\bottomrule
\end{tabular}
\end{table}

\section{Conclusion \& Future Work}

We formalized statistically-lossless LLM compression, distinguishing task-lossless (TL) and distribution-lossless (DL) targets, and proposed EAR as a fidelity metric. The $\gamma^2$ variance law shows asymmetric quantization is a prerequisite for DL. SLQ achieves TL at 3.3--4.7 bits, DL at 5.0--6.6 bits, and $1.7$--$3.7\times$ speedup over BF16 across four models, and transfers unchanged to a mixture-of-experts model. Limitations are discussed in Appendix~\ref{app:limitations}; future work includes weight-and-activation quantization and speculative decoding.

\section*{Ethics Statement}

This paper presents research aimed at advancing the field of Machine Learning. We do not foresee any specific ethical concerns arising directly from this work that warrant particular emphasis beyond the broader societal implications common to the field.

\section*{Acknowledgments}
We acknowledge the use of Humming kernels developed by Jinzhen Lin and the Venus Team, Ant Group. 

\bibliography{references}
\bibliographystyle{colm2026_conference}

\appendix
\section*{Appendix}

\section{Algorithms and Metric Prediction}
\label{app:shapley-algorithm}

\subsection{Multi-Bitwidth Shapley Estimation}

Algorithm~\ref{alg:multi-shapley} details the multi-bitwidth Shapley estimation procedure described in Section~\ref{sec:shapley}.

\begin{algorithm}[H]
\caption{Multi-Bitwidth Shapley Estimation}
\label{alg:multi-shapley}
\begin{algorithmic}[1]
\REQUIRE Groups $\mathcal{G} = \{G_1, \ldots, G_M\}$, bitwidths $\mathcal{B}$, permutations $P$
\ENSURE Cost table $\phi_m^{(b)}$ for all groups $m$, bitwidths $b$
\FOR{each target bitwidth $b^* \in \mathcal{B} \setminus \{b_\text{max}\}$}
    \FOR{$p = 1, \ldots, P$}
        \STATE Sample random permutation $\pi$ of $\{1, \ldots, M\}$
        \STATE Set all groups to $b_\text{max}$; evaluate metrics
        \FOR{$j = 1, \ldots, M$}
            \STATE Switch group $G_{\pi_j}$ from $b_\text{max}$ to $b^*$
            \STATE Evaluate metrics (KL, EAR)
            \STATE Record marginal change $\Delta_{\pi_j}^{(b^*)}$
        \ENDFOR
    \ENDFOR
    \STATE $\phi_m^{(b^*)} \leftarrow \frac{1}{P} \sum_p \Delta_m^{(b^*)}$ for each group $m$
\ENDFOR
\end{algorithmic}
\end{algorithm}

\subsection{Metric Prediction from Sensitivity Data}
\label{app:metric_prediction}

During binary search, the predicted evaluation mode estimates KL divergence and EAR from sensitivity coefficients without running model inference. This enables rapid iteration over candidate bitwidth budgets.

\paragraph{Linear Method.} Under the independence assumption, total metric degradation is the sum of per-group contributions:
\begin{align}
    \widehat{\Delta D_{\text{KL}}}(\mathbf{b}) &= \sum_{m} e_m^{(b_m)} \cdot \alpha_m^{\text{KL}} \\
    \widehat{\Delta \text{EAR}}(\mathbf{b}) &= \sum_{m} e_m^{(b_m)} \cdot \alpha_m^{\text{EAR}}
\end{align}
where $e_m^{(b)}$ is the normalized reconstruction error for group $G_m$ at bitwidth $b$, computed as $e_m^{(b)} = \frac{1}{|G_m|} \sum_{\ell \in G_m} \|W_\ell - \hat{W}_\ell^{(b)}\|_F^2 / \|W_\ell\|_F^2$. The coefficients $\alpha_m^{\text{KL}}$ and $\alpha_m^{\text{EAR}}$ are estimated via noise injection as described in Section~\ref{sec:sensitivity-estimation}. The predicted metrics are then $\widehat{D_{\text{KL}}} = \widehat{\Delta D_{\text{KL}}}$ and $\widehat{\text{EAR}} = 1 - \widehat{\Delta \text{EAR}}$.

\paragraph{Shapley Method.} The Shapley values $\phi_m^{(b)}$ directly represent expected metric degradation from assigning group $m$ to bitwidth $b$ relative to the reference bitwidth $b_{\max}$:
\begin{align}
    \widehat{D_{\text{KL}}}(\mathbf{b}) &= \sum_{m} \phi_m^{\text{KL},(b_m)} \\
    \widehat{\text{EAR}}(\mathbf{b}) &= 1 - \sum_{m} \phi_m^{\text{EAR},(b_m)}
\end{align}
Since Shapley values are computed using actual quantized weights during permutation sampling, they capture both reconstruction error magnitude and its propagation through the network. For unified permutation sampling, which outputs transition costs rather than absolute costs, we apply cumulative summation from $b_{\max}$ downward before use in the solver.

\subsection{Task-Lossless Single-Point Calibration}
\label{app:tl-calibration}

Algorithm~\ref{alg:tl-search} details the single-point calibration procedure for task-lossless bitwidth search described in Section~\ref{sec:solver}.

\begin{algorithm}[h!]
\caption{Task-Lossless Single-Point Calibration Search}
\label{alg:tl-search}
\begin{algorithmic}[1]
\REQUIRE Sensitivity database $\phi_m^{(b)}$, ILP solver, target recovery $r^*$, calibration bitwidth $b_{\text{cal}}$
\ENSURE Minimum-bitwidth configuration $\mathbf{b}^*$ with recovery $\geq r^*$
\STATE \textbf{// Step 1: Calibrate}
\STATE $\mathbf{b}_{\text{cal}} \leftarrow \textsc{ILP-Solve}(\phi, \bar{b} = b_{\text{cal}})$ \COMMENT{Proxy-optimal config at $b_{\text{cal}}$}
\STATE $D_{\mathrm{KL}}^{\text{actual}} \leftarrow \textsc{MeasureKL}(\mathbf{b}_{\text{cal}})$ \COMMENT{One forward pass}
\STATE $D_{\mathrm{KL}}^{\text{predicted}} \leftarrow \sum_m \phi_m^{(b_{\text{cal},m})}$ \COMMENT{Shapley prediction}
\STATE $\rho \leftarrow D_{\mathrm{KL}}^{\text{actual}} / D_{\mathrm{KL}}^{\text{predicted}}$ \COMMENT{Calibration ratio}
\STATE $\text{recovery}_{\text{cal}} \leftarrow \textsc{Benchmark}(\mathbf{b}_{\text{cal}}) \,/\, \textsc{Benchmark}(\text{BF16})$
\STATE \textbf{// Step 2: Fit linear model}
\STATE $\alpha \leftarrow (1 - \text{recovery}_{\text{cal}}) \,/\, D_{\mathrm{KL}}^{\text{actual}}$ \COMMENT{Slope from single point}
\STATE $D_{\mathrm{KL}}^{\text{thresh}} \leftarrow (1 - r^*) \,/\, \alpha$ \COMMENT{KL threshold for target recovery}
\STATE \textbf{// Step 3: Binary search (no forward passes)}
\STATE $b_{\text{lo}} \leftarrow \min(\mathcal{B}),\; b_{\text{hi}} \leftarrow \max(\mathcal{B})$
\WHILE{$b_{\text{hi}} - b_{\text{lo}} > \epsilon$}
    \STATE $b_{\text{mid}} \leftarrow (b_{\text{lo}} + b_{\text{hi}}) / 2$
    \STATE $\mathbf{b} \leftarrow \textsc{ILP-Solve}(\phi, \bar{b} = b_{\text{mid}})$
    \STATE $\hat{D}_{\mathrm{KL}} \leftarrow \rho \cdot \sum_m \phi_m^{(b_m)}$ \COMMENT{Calibrated prediction}
    \IF{$\hat{D}_{\mathrm{KL}} \leq D_{\mathrm{KL}}^{\text{thresh}}$}
        \STATE $b_{\text{hi}} \leftarrow b_{\text{mid}}$;\; $\mathbf{b}^* \leftarrow \mathbf{b}$
    \ELSE
        \STATE $b_{\text{lo}} \leftarrow b_{\text{mid}}$
    \ENDIF
\ENDWHILE
\STATE \textbf{Return:} $\mathbf{b}^*$
\end{algorithmic}
\end{algorithm}

The calibration step (lines 2--6) requires one forward pass and two benchmark runs. The binary search (lines 10--19) uses only the sensitivity database and ILP solver, with no additional model evaluations.

\subsection{Evolutionary Search as an Alternative Solver}
\label{app:evo}

As an alternative to the ILP-based solver (Section~\ref{sec:solver}), bitwidth allocation can be formulated as a combinatorial search problem. We implement a variant of EvoPress~\citep{sieberling2025evopress} adapted to our constraint-based setting: whereas the original minimizes loss subject to a bitwidth budget, we minimize bitwidth subject to a quality constraint. Algorithm~\ref{alg:evo} summarizes the procedure.

\begin{algorithm}[H]
\caption{Constraint-Based Evolutionary Bitwidth Search}
\label{alg:evo}
\begin{algorithmic}[1]
\STATE \textbf{Input:} Groups $\mathcal{G}$, quality threshold $\tau$, offspring count $\lambda$
\STATE Initialize $\mathbf{b}^{(0)} \leftarrow$ uniform at $\max(\mathcal{B})$, \texttt{stall} $\leftarrow 0$
\WHILE{not converged}
    \STATE Generate $\lambda$ offspring via level-switch mutation
    \STATE Filter via multi-stage selection (increasing tokens per stage)
    \STATE $\mathbf{b}^* \leftarrow$ best offspring by fitness (include parent in final stage)
    \IF{$\mathbf{b}^*$ satisfies $D_{\text{KL}} < \tau$ and $\bar{b}(\mathbf{b}^*) < \bar{b}(\mathbf{b}^{(t)})$}
        \STATE $\mathbf{b}^{(t+1)} \leftarrow \mathbf{b}^*$, \texttt{stall} $\leftarrow 0$
    \ELSE
        \STATE \texttt{stall} $\leftarrow$ \texttt{stall} $+ 1$
    \ENDIF
    \IF{\texttt{stall} $\geq$ threshold}
        \STATE $\mathbf{b}^{(t+1)} \leftarrow \textsc{Curation}(\mathbf{b}^{(t)})$ \COMMENT{Bitwidth-neutral reallocation}
    \ENDIF
\ENDWHILE
\STATE \textbf{Return:} $\mathbf{b}^{(t)}$
\end{algorithmic}
\end{algorithm}

\paragraph{Fitness Function.} For an offspring $\mathbf{b}'$ with parent $\mathbf{b}$, let $\Delta b = \bar{b}(\mathbf{b}) - \bar{b}(\mathbf{b}')$ denote the bitwidth reduction. When $\Delta b > 0$, fitness is:
\begin{equation}
    f(\mathbf{b}') = \frac{D_{\text{KL}}(\mathbf{b}')}{\Delta b} \cdot \begin{cases} 1 & \text{if } D_{\text{KL}}(\mathbf{b}') \leq \tau \\ \gamma \cdot (D_{\text{KL}}(\mathbf{b}') - \tau) & \text{otherwise} \end{cases}
\end{equation}
where $\gamma \gg 1$ is a penalty coefficient. Lower fitness is better. Offspring that increase bitwidth ($\Delta b < 0$) receive heavily penalized fitness values.

\paragraph{Adaptive Curation.} The inverted formulation introduces a challenge: because the search only accepts bitwidth-reducing mutations, suboptimal early allocations cannot be corrected. When progress stalls, curation performs bitwidth-neutral swaps between equal-sized groups, keeping the average constant while reallocating capacity from insensitive to sensitive groups. This allows the search to escape local optima and continue compressing.

\paragraph{Limitations.} The evolutionary search requires orders of magnitude more forward passes than the ILP-based solver and does not scale to large models. We include it here as a reference formulation; all experimental results in the main paper use the sensitivity-based ILP solver.

\section{Derivation of the Gamma-Squared Variance Law}
\label{app:bussgang}

This appendix provides a self-contained derivation of Lemma~\ref{thm:gamma-squared}. We proceed in three steps: analyzing quantizer step sizes, deriving quantization noise variance via Bennett's high-rate approximation, and combining the two results.

\subsection{Definitions and Preliminaries}

Let $W$ be a scalar random variable representing a neural network weight, with probability density function supported on $[L, U]$ where $L < U$. We recall the geometric quantities from Definition~\ref{def:centering-inefficiency}: the dynamic range $R = U - L$, the maximum absolute magnitude $M = \max(|L|, |U|)$, and the centering inefficiency $\gamma = 2M/R$. For distributions that are not symmetric about zero, the symmetric interval $[-M, M]$ strictly contains $[L, U]$, so $\gamma > 1$.

\subsection{Step 1: Quantization Step Size Analysis}

Consider a uniform scalar quantizer with $n$ discrete levels (typically $n = 2^b$ for bitwidth $b$). The step size $\Delta$ is the distance between adjacent quantization levels.

\paragraph{Asymmetric quantizer.} The grid spans exactly $[L, U]$, dividing the range $R$ into $n - 1$ intervals:
\begin{equation}
    \Delta_{\mathrm{asym}} = \frac{R}{n - 1}.
\end{equation}

\paragraph{Symmetric quantizer.} The grid is constrained to be symmetric about zero, spanning $[-M, M]$:
\begin{equation}
    \Delta_{\mathrm{sym}} = \frac{2M}{n - 1}.
\end{equation}

\begin{lemma}[Step size scaling]
\label{lem:step-size}
The symmetric step size relates to the asymmetric step size by $\Delta_{\mathrm{sym}} = \gamma \, \Delta_{\mathrm{asym}}$.
\end{lemma}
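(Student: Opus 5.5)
The plan is to compute the ratio of the two step sizes directly from the displayed formulas and simplify it using Definition~\ref{def:centering-inefficiency}. Both quantizers use the same number of levels $n$, so both grids divide their spanned interval into the same number $n-1$ of equal subintervals; this makes the comparison purely geometric. The assumption $L<U$ gives $R>0$, and $n \ge 2$ gives $n-1>0$, so every quantity we divide by is nonzero.

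First, I will write
\begin{equation*}
\frac{\Delta_{\mathrm{sym}}}{\Delta_{\mathrm{asym}}} = \frac{2M/(n-1)}{R/(n-1)} = \frac{2M}{R}.
\end{equation*}
The factor $n-1$ cancels, which is why the relation holds at every bitwidth $b$ with $n = 2^b$. Second, I will recognize $2M/R$ as $\gamma$ by Definition~\ref{def:centering-inefficiency}. Multiplying through by $\Delta_{\mathrm{asym}}$ then gives $\Delta_{\mathrm{sym}} = \gamma\,\Delta_{\mathrm{asym}}$.

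As a sanity check, I will also note that $\gamma \ge 1$. Since $M \ge |U| \ge U$ and $M \ge |L| \ge -L$, we get $2M \ge U - L = R$. So the symmetric step is never finer than the asymmetric one, which is consistent with the claim that $[-M,M] \supseteq [L,U]$.

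There is no real obstacle here. The only point needing care is a convention: both grids must use the same $n$ and the same ``$n-1$ intervals'' normalization. If instead a symmetric grid reserved levels to be exactly symmetric, for example $n-1$ usable levels with $2M/(n-2)$ spacing, the identity would only hold asymptotically. I will state that we adopt the convention fixed in the displayed definitions above.
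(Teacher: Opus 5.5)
Your proposal is correct and follows the paper's proof exactly: the paper likewise divides $2M/(n-1)$ by $R/(n-1)$, cancels the common factor $n-1$, and identifies $2M/R$ with $\gamma$ via Definition~\ref{def:centering-inefficiency}. Your additional remarks (nonzero denominators from $L<U$ and $n\ge 2$, the bound $\gamma \ge 1$, and the shared-$n$ convention) are sound and simply make explicit what the paper leaves implicit.
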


\begin{proof}
Direct computation:
\begin{equation}
    \frac{\Delta_{\mathrm{sym}}}{\Delta_{\mathrm{asym}}} = \frac{2M/(n-1)}{R/(n-1)} = \frac{2M}{R} = \gamma.
\end{equation}
\end{proof}

\subsection{Step 2: Quantization Noise Variance}

We analyze the noise characteristics using classical results from quantization theory.

\paragraph{Bennett's approximation vs.\ Bussgang decomposition.} Two classical results characterize quantization noise. Bennett's approximation~\citep{bennett1948spectra} shows that, under the high-rate assumption (many quantization levels), the quantization error $e = Q(W) - W$ is approximately uniformly distributed within each bin, $e \sim \mathcal{U}[-\Delta/2, \, \Delta/2]$. The only requirement is that the probability density of $W$ is approximately constant within each bin, which is satisfied when the number of quantization levels is large relative to the variation in the density. The Bussgang decomposition~\citep{bussgang1952crosscorrelation} provides a complementary signal-processing perspective, decomposing the quantizer output as $Q(W) = \alpha W + d$ where $\alpha$ is a gain term and $d$ is noise uncorrelated with the input ($\mathbb{E}[Wd] = 0$). This uncorrelation guarantee, however, requires the input $W$ to be Gaussian. Since LLM weight distributions are generally non-Gaussian, we rely on Bennett's result, which applies under the weaker high-rate condition.

\begin{lemma}[Variance of quantization noise]
\label{lem:bussgang-variance}
Under the high-rate assumption, the variance of the quantization noise is $\sigma^2 = \Delta^2/12$.
\end{lemma}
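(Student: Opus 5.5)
Under the high-rate assumption, Bennett's approximation says that conditional on $W$ falling in a given bin of width $\Delta$, the quantization error $e = Q(W) - W$ is approximately uniform on $[-\Delta/2, \Delta/2]$. The plan is to first compute the conditional moments of $e$ on a single bin under this uniformity, and then average over bins using the law of total expectation and the law of total variance. This gives the unconditional variance.

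\textbf{Step 1 (single bin).} Fix a bin $[c - \Delta/2, c + \Delta/2]$ with reconstruction point $c$ at its midpoint, as in round-to-nearest. Assume the density $f_W$ is approximately constant on the bin, which is exactly the high-rate condition. Then $W$ given the bin is approximately uniform, so $e = c - W$ is approximately uniform on $[-\Delta/2, \Delta/2]$. Direct integration gives $\mathbb{E}[e \mid \text{bin}] = 0$ and
\begin{equation}
\mathbb{E}[e^2 \mid \text{bin}] = \frac{1}{\Delta}\int_{-\Delta/2}^{\Delta/2} t^2\,dt = \frac{\Delta^2}{12}.
\end{equation}

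\textbf{Step 2 (averaging).} Let $p_j$ be the probability of bin $j$. Then
\begin{equation}
\mathbb{E}[e^2] = \sum_j p_j \cdot \frac{\Delta^2}{12} = \frac{\Delta^2}{12}, \qquad \mathbb{E}[e] = 0.
\end{equation}
This holds because every bin has the same width $\Delta$ and the $p_j$ sum to one. Hence $\sigma^2 = \operatorname{Var}(e) = \mathbb{E}[e^2] = \Delta^2/12$. The result is independent of the shape of $f_W$ beyond local flatness. That independence is what lets Lemma~\ref{lem:step-size} transfer directly to the variance ratio in Lemma~\ref{thm:gamma-squared}.

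\textbf{Main obstacle: edge bins.} For both the asymmetric grid on $[L,U]$ and the symmetric grid on $[-M,M]$, the outermost reconstruction points lie at the interval endpoints. The two outermost cells therefore have half-width $\Delta/2$ inside the support, rather than being centered around the level, so the error there is not uniform on $[-\Delta/2,\Delta/2]$. I would handle this by bounding the edge contribution. Each of the two boundary half-cells carries probability $O(\max f_W \cdot \Delta)$ and error at most $\Delta/2$, so their total effect on $\mathbb{E}[e^2]$ is $O(\Delta^3)$. This is lower order than $\Delta^2/12$ as $n \to \infty$, and the lemma is understood as the leading-order (high-rate) statement. Absent regions of the symmetric grid, such as $[-M, L]$ when $|U| > |L|$, carry zero probability and do not contribute at all.
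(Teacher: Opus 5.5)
Your proof is correct, and its core is the same computation as the paper's, $\frac{1}{\Delta}\int_{-\Delta/2}^{\Delta/2} t^2\,dt = \Delta^2/12$. The difference is in what each proof assumes. The paper reads Bennett's approximation as saying the unconditional error is exactly $\mathcal{U}[-\Delta/2,\Delta/2]$, computes that distribution's mean and variance, and stops. You derive the uniform law bin by bin from local flatness and then average with the law of total expectation.

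Your route makes two things explicit that the paper leaves implicit. First, the bin probabilities $p_j$, and hence the shape of $f_W$, drop out. Second, the lemma is only a leading-order statement with $O(\Delta^3)$ boundary corrections. Consequently the ratio in Lemma~\ref{thm:gamma-squared} holds up to a factor $1+O(\Delta)$ rather than exactly.

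One imprecision in your edge analysis: for the symmetric grid with, say, $U = M > |L|$, the low-end boundary cell is not a half-cell at a grid endpoint. It is whichever interior cell straddles $L$, cut at an arbitrary point, so its conditional second moment genuinely differs from $\Delta^2/12$. Your bound (probability $O(\max f_W\cdot\Delta)$, error at most $\Delta/2$) still handles it, so nothing breaks.

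For the asymmetric half-cells the correction is even smaller than you state. There the error is uniform on an interval of length $\Delta/2$ with an endpoint at $0$, whose second moment is again exactly $\Delta^2/12$. Only the mean shifts, by $O(\Delta^2)$, which changes the variance by $O(\Delta^4)$.
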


\begin{proof}
The mean of $e \sim \mathcal{U}[-\Delta/2,\, \Delta/2]$ is zero. The variance is:
\begin{align}
    \sigma^2 
    &= \int_{-\Delta/2}^{\Delta/2} x^2 \cdot \frac{1}{\Delta} \, dx \\
    &= \frac{1}{\Delta} \left[ \frac{x^3}{3} \right]_{-\Delta/2}^{\Delta/2} \\
    &= \frac{1}{\Delta} \cdot \frac{2(\Delta/2)^3}{3} \\
    &= \frac{\Delta^2}{12}.
\end{align}
\end{proof}

\subsection{Step 3: Proof of the Gamma-Squared Law}

We combine Lemma~\ref{lem:step-size} and Lemma~\ref{lem:bussgang-variance} to establish the main result.

\begin{proof}[Proof of Lemma~\ref{thm:gamma-squared}]
Applying Lemma~\ref{lem:bussgang-variance} to each quantizer:
\begin{equation}
    \sigma_{\mathrm{asym}}^2 = \frac{\Delta_{\mathrm{asym}}^2}{12}, \qquad
    \sigma_{\mathrm{sym}}^2 = \frac{\Delta_{\mathrm{sym}}^2}{12}.
\end{equation}
Substituting $\Delta_{\mathrm{sym}} = \gamma \, \Delta_{\mathrm{asym}}$ from Lemma~\ref{lem:step-size}:
\begin{align}
    \sigma_{\mathrm{sym}}^2 
    &= \frac{(\gamma \, \Delta_{\mathrm{asym}})^2}{12} \\
    &= \gamma^2 \cdot \frac{\Delta_{\mathrm{asym}}^2}{12} \\
    &= \gamma^2 \, \sigma_{\mathrm{asym}}^2.
\end{align}
\end{proof}

The geometric penalty $\gamma$ imposed by symmetric quantization on asymmetric weight distributions therefore leads to a quadratic increase in quantization noise variance. For a single linear layer $\mathbf{y} = W\mathbf{x}$, the output perturbation due to quantization is $\boldsymbol{\delta} = E\mathbf{x}$ where $E = Q(W) - W$. The expected squared error per output unit is $\sigma^2 \|\mathbf{x}\|^2$. Since symmetric and asymmetric quantization are applied to the same weights with identical inputs, $\|\mathbf{x}\|^2$ cancels in the ratio, giving $\mathbb{E}[\|\boldsymbol{\delta}_{\mathrm{sym}}\|^2] / \mathbb{E}[\|\boldsymbol{\delta}_{\mathrm{asym}}\|^2] = \gamma^2$. This provides the analytical foundation for the empirical observations in Section~\ref{empirical:asym}, where symmetric quantization exhibits consistently higher flip rates and lower EAR.

\section{Additional Experiments and Results}
\label{app:additional-experiments}

We present additional experiments that support and extend the main results. Appendix~\ref{app:sensitivity-plots} shows the per-layer sensitivity landscape and how it translates into bitwidth assignments. Appendix~\ref{app:asym-empirical} quantifies the asymmetric-vs.-symmetric gap on a task-level benchmark. Appendix~\ref{app:speedups} measures per-layer kernel speedups across bitwidths. Appendix~\ref{app:kl_acc} characterizes how distributional shift (KL) relates to zero-shot accuracy, connecting the distribution-lossless (DL) and task-lossless (TL) operating points. Appendix~\ref{app:actual_bitwidths} reports exact effective bits per parameter accounting for all overheads. Appendix~\ref{app:token-divergence} inspects token-level divergence under DL compression. Appendix~\ref{app:detailed-results} gives the full sweep of quantization configurations, formats, and allocation strategies.

\subsection{Sensitivity Estimates and Bitwidth Assignments}
\label{app:sensitivity-plots}

Figures~\ref{fig:sensitivity-continuous} and~\ref{fig:sensitivity-discrete} illustrate the two stages of the SLQ pipeline for Qwen3-8B: sensitivity estimation and bitwidth allocation. Both methods agree on the broad sensitivity structure (attention K/V most sensitive, MLP least), but differ in how they capture it, which affects the final allocation.

Figure~\ref{fig:sensitivity-continuous} shows the continuous per-layer EAR sensitivity coefficients from linear estimation (Section~\ref{sec:sensitivity}). The coefficients reflect each layer's independent contribution to output degradation: K and V projections are consistently the most sensitive, while MLP gate/up projections and the output projection are less so. Early and late layers tend to be more sensitive than middle layers.

Figure~\ref{fig:sensitivity-discrete} shows the discrete bitwidth assignment produced by the ILP solver using Shapley-based sensitivity estimates, targeting distribution-lossless compression (5.70 bits average, EAR~$= 99.0\%$). Because Shapley values account for cross-layer interactions during permutation sampling, the resulting allocation can differ from what linear estimates alone would produce. The solver assigns 8 bits to the most sensitive K/V projections, 6--7 bits to Q and output projections, and 4--5 bits to the more robust MLP layers.

\begin{figure}[H]
  \centering
  \includegraphics[width=\linewidth]{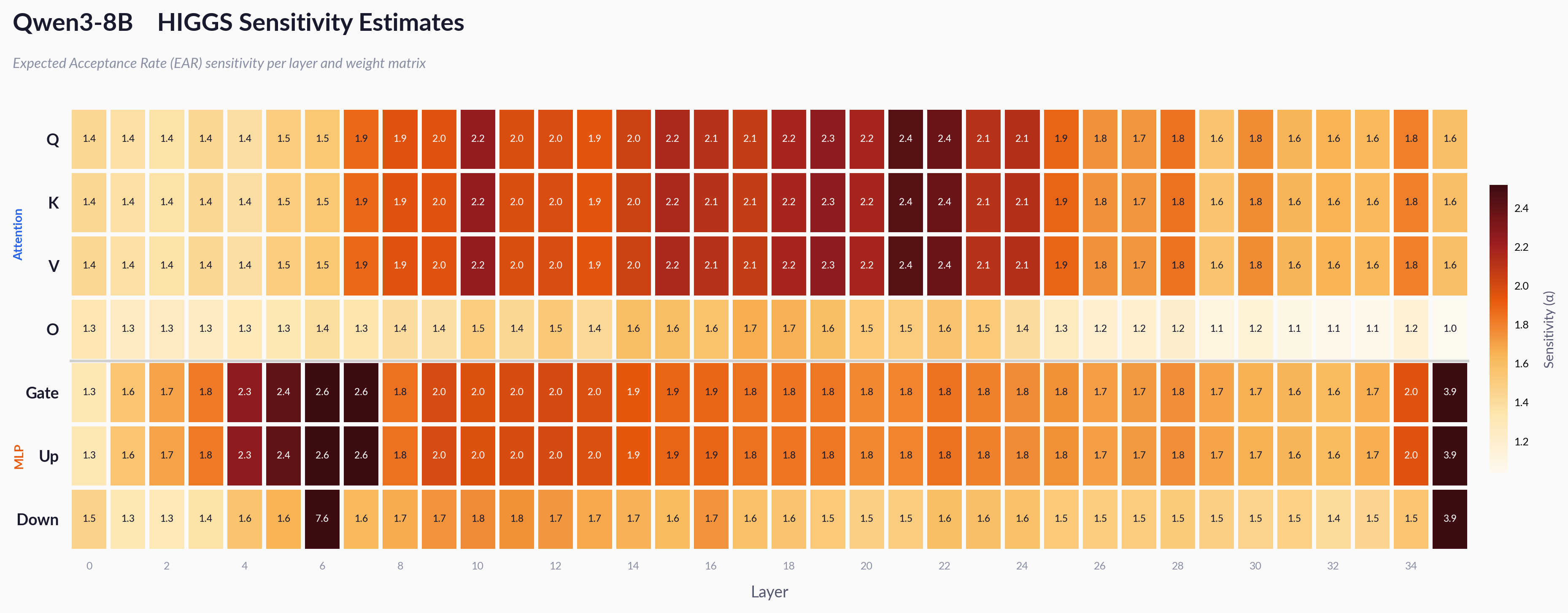}
  \caption{Per-layer EAR sensitivity coefficients from linear estimation for Qwen3-8B. Continuous values represent each layer's independent contribution to output degradation.}
  \label{fig:sensitivity-continuous}
\end{figure}

\begin{figure}[H]
  \centering
  \includegraphics[width=\linewidth]{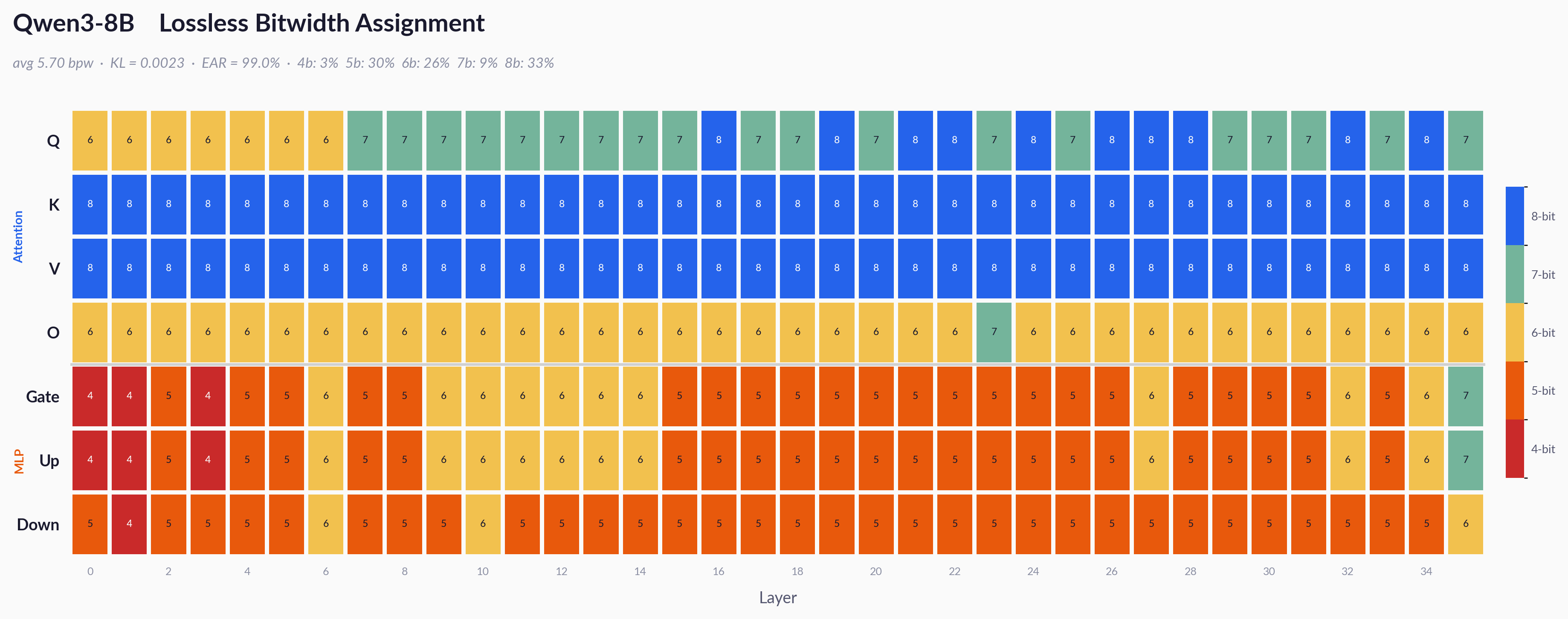}
  \caption{Bitwidth assignment from Shapley-based allocation for Qwen3-8B DL configuration (5.70 avg bits, EAR~$= 99.0\%$, KL~$= 0.0023$). The solver assigns higher bitwidths to sensitive attention layers and lower bitwidths to robust MLP layers.}
  \label{fig:sensitivity-discrete}
\end{figure}

\subsection{Asymmetric vs.\ Symmetric: Additional Results}
\label{app:asym-empirical}

Table~\ref{tab:mmlu-results} provides the full MMLU-Pro breakdown referenced in Section~\ref{empirical:asym}. The task reduces to a single-token decision (A/B/C/D), so every decision flip directly changes the answer. The model is instructed to output only the final answer token; overall scores are therefore lower than with chain-of-thought reasoning.

\begin{table}[H]
\centering
\caption{MMLU-Pro zero-shot (12,032 samples), uniform 4-bit, Llama-3.1-8B-Instruct.}
\label{tab:mmlu-results}
\small
\begin{tabular}{@{}lccc@{}}
\toprule
Metric & FP16 & Asymmetric & Symmetric \\
\midrule
Accuracy & 18.09\% & \textbf{15.99\%}  & 14.92\% \\
Recovery rate & 100\% & \textbf{88.4\%}  & 82.5\% \\
\midrule
Mean margin on flip & 0 & \textbf{0.162} & 0.175  \\
\bottomrule
\end{tabular}
\end{table}

The asymmetric--symmetric gap is most pronounced for distribution-lossless (DL) compression, where EAR is the target metric. For task-lossless (TL) compression, symmetric quantization may still suffice since benchmark scores absorb additional token flips without measurably affecting aggregate accuracy (Section~\ref{empirical:asym}). Full EAR trajectories across bitwidths and additional results on margin at disagreement appear in Appendix~\ref{app:detailed-results}.

\subsection{Comparison to Stretched Elastic Quantization}
\label{app:seq}

Stretched Elastic Quantization (SEQ)~\citep{liu2025paretoq} addresses a \emph{level-allocation} inefficiency: it spaces levels evenly across the span, which omits a zero level at even level counts ($2$-bit: $\{\pm 0.5, \pm 1.5\}$). The $\gamma^2$ law concerns a different, \emph{centering} inefficiency, and the two are independent. SEQ learns only the scale $\alpha$, which rescales the grid but never shifts it off zero, so a SEQ grid remains centered and cannot fit itself to an off-center $[L, U]$; only a zero-point realizes the finer step of Lemma~\ref{thm:gamma-squared}. SEQ therefore gives no relief on the centering axis, and the choice between the two is not a trade-off. Consistently, ParetoQ applies SEQ only at ternary and 2-bit precision, where scarce levels make their allocation the binding constraint, and reverts to the zero-inclusive LSQ grid at 3 and 4 bit.

\subsection{Speedup with Humming Kernels}
\label{app:speedups}

\begin{figure}[H]
\centering
\begin{tikzpicture}
\begin{axis}[
    width=0.85\textwidth,
    height=5.5cm,
    ybar=0.1cm,
    bar width=0.35cm,
    ylabel={Speedup ($\times$)},
    xlabel={Weight Bitwidth},
    symbolic x coords={4-bit, 5-bit, 6-bit, 7-bit, 8-bit},
    xtick=data,
    ymin=0,
    ymax=4.5,
    ymajorgrids=true,
    grid style={dashed, gray!40},
    legend style={
        at={(0.5,-0.18)},
        anchor=north,
        legend columns=2,
        /tikz/every even column/.append style={column sep=0.5cm}
    },
    enlarge x limits=0.15,
    nodes near coords,
    every node near coord/.append style={font=\footnotesize, rotate=45, anchor=west},
    nodes near coords align={vertical},
    point meta=explicit symbolic,
]

\addplot[fill=color1, draw=color1!80!black] coordinates {
    (4-bit, 3.31) [3.31$\times$]
    (5-bit, 2.82) [2.82$\times$]
    (6-bit, 2.49) [2.49$\times$]
    (7-bit, 2.20) [2.20$\times$]
    (8-bit, 2.01) [2.01$\times$]
};

\addplot[fill=color2, draw=color2!80!black] coordinates {
    (4-bit, 2.89) [2.89$\times$]
    (5-bit, 2.53) [2.53$\times$]
    (6-bit, 2.24) [2.24$\times$]
    (7-bit, 2.00) [2.00$\times$]
    (8-bit, 1.85) [1.85$\times$]
};

\legend{Gate/Up Proj (57344$\times$8192)  Down Proj (8192$\times$28672)}
\end{axis}
\end{tikzpicture}
\caption{Speedup relative to FP16 baseline for different quantization bitwidths, when executing layers of Llama-3-70B. }
\label{fig:speedup}
\end{figure}

\subsection{KL Divergence and Zero-Shot Performance}
\label{app:kl_acc}

To establish the relationship between output fidelity and downstream task performance, we conduct controlled experiments that systematically vary the degree of distributional shift from the original model. We inject uniform noise into model weights at increasing levels, where noise magnitude is scaled by each weight matrix's standard deviation to approximate quantization error characteristics. For each noise level, we compute KL divergence between the perturbed and original model's output distributions, then evaluate zero-shot accuracy across five task categories: Reasoning (ARC, HellaSwag, WinoGrande, PIQA), Knowledge (MMLU, TriviaQA), Math (GSM8K, MATH), Code (HumanEval, MBPP), and Retrieval (NeedleBench).

Figure~\ref{fig:kl_vs_acc} presents degradation curves across six models spanning different scales and architectures. The results reveal a consistent pattern: accuracy remains stable while KL divergence remains small, then degrades as divergence increases. The specific onset of degradation varies with model scale and task category, with Retrieval tasks demonstrating the greatest robustness and Math and Code proving more sensitive to distributional shift. Crucially, all models maintain baseline accuracy when KL divergence stays below 0.01. This establishes the link between distribution-lossless (DL) and task-lossless (TL) compression: a model that is DL (high distributional fidelity, as measured by EAR or KL) is also TL (preserves zero-shot task accuracy). The converse does not hold: a model can be TL at a lower bitwidth without meeting the stricter DL criterion, since benchmark scores are less sensitive than distributional metrics (Section~\ref{sec:solver}).

Figure~\ref{fig:kl_acc_linear} zooms into the near-lossless regime on a linear scale, complementing the log-scale overview of Figure~\ref{fig:kl_acc_all}. The approximately linear decline of accuracy with KL divergence in this range is the property exploited by single-point calibration (Section~\ref{sec:solver}): it is what allows a single anchor measurement to determine the slope $\alpha$, and hence the KL threshold corresponding to any target recovery.

\begin{figure}[H]
    \centering
    \includegraphics[width=0.62\textwidth]{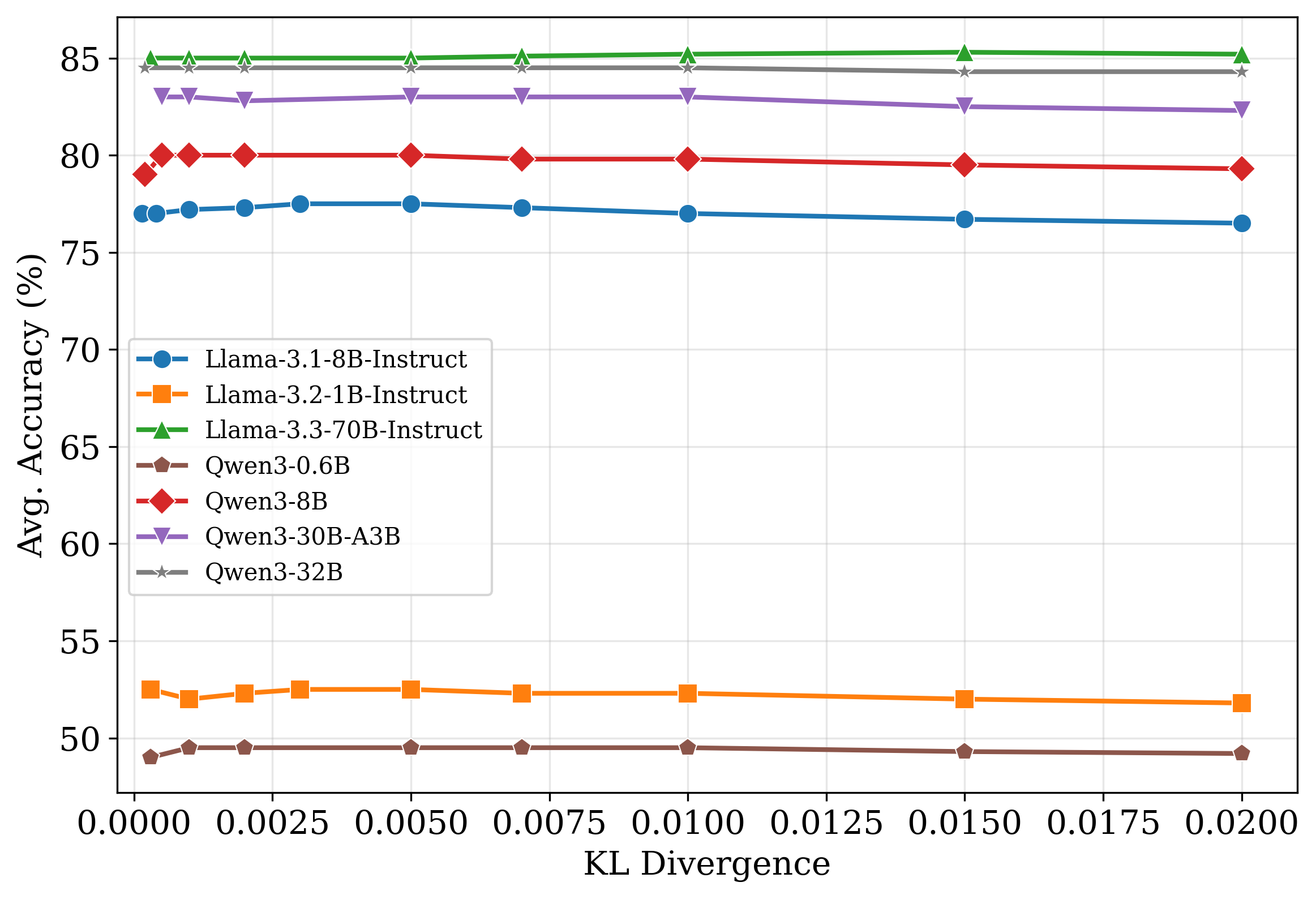}
    \caption{Near-lossless regime on a linear scale. Accuracy declines approximately linearly with KL divergence, which single-point calibration exploits to derive a KL threshold from one benchmark measurement.}
    \label{fig:kl_acc_linear}
\end{figure}

\begin{table}[h]
\centering
\caption{KL divergence on Llama-3.1-8B-Instruct. Methods above the rule operate at uniform 4-bit precision; SLQ uses non-uniform scalar integer allocation across $\{4,5,6,7,8\}$ bits (Section~\ref{sec:methods}). GPTQ, QTIP and SLQ are measured by us on the released checkpoints under an identical evaluation setup. $\dagger$: value as reported by~\citet{tseng2025yaqa}, which we did not reproduce on this model; our own YAQA measurements on Llama-3.3-70B are in Table~\ref{tab:sota-comparison}.}
\label{tab:kl-comparison}
\small
\begin{tabular}{@{}lccc@{}}
\toprule
Method & Bits & KL $\downarrow$ & EAR $\uparrow$ \\
\midrule
GPTQ (asym, g128) & 4.16 & 0.028 & 0.938 \\
QTIP & ${\sim}$4 & 0.017 & 0.952 \\
YAQA$^{\dagger}$~ & ${\sim}$4 & 0.028 & -- \\
\midrule
{SLQ} & {5.59} & \textbf{0.004} & \textbf{0.994} \\
\bottomrule
\end{tabular}
\end{table}

\begin{figure}[t]
    \centering
    \begin{subfigure}[t]{0.32\textwidth}
        \centering
        \includegraphics[width=\textwidth]{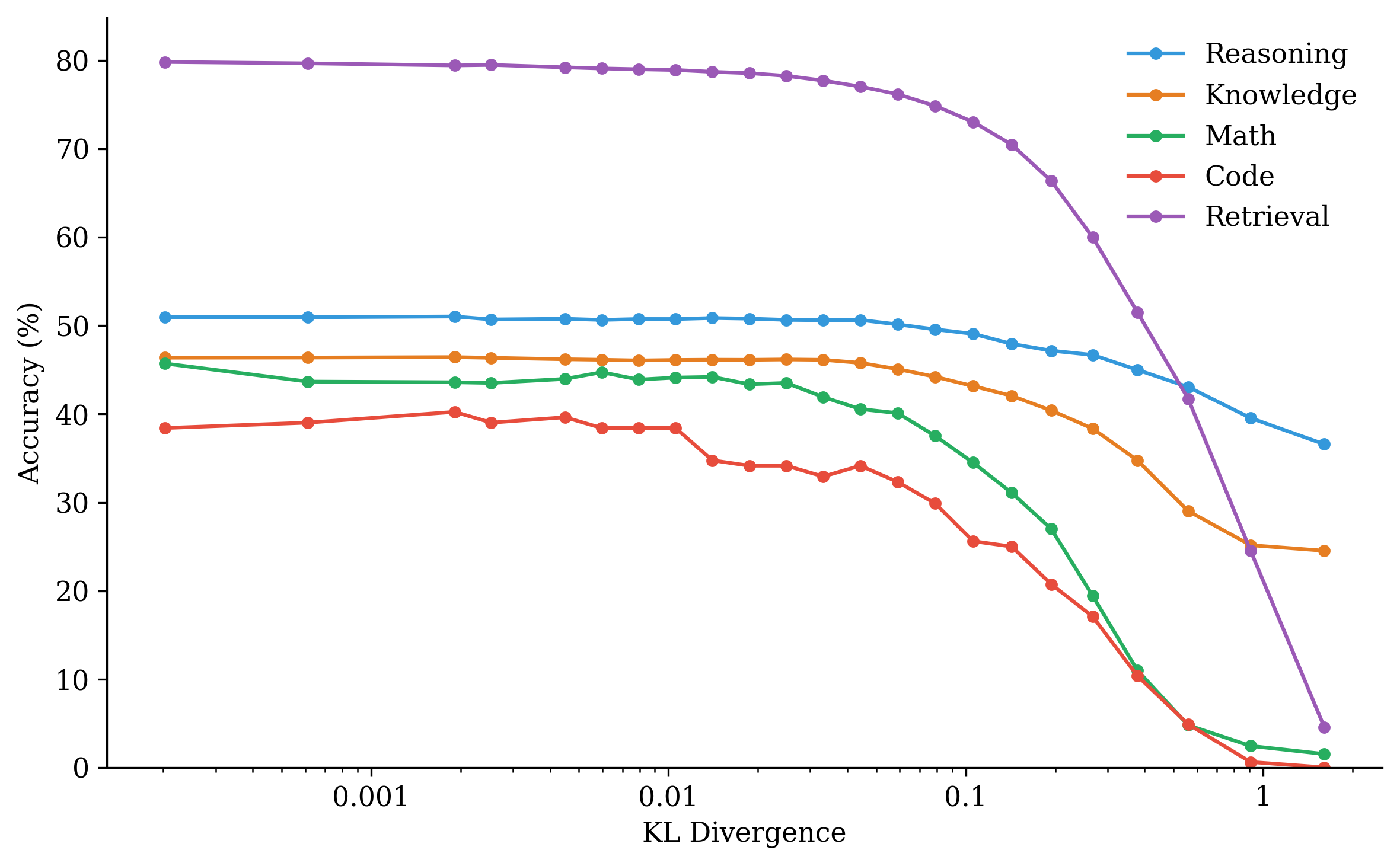}
        \caption{Llama-3.2-1B-Instruct}
    \end{subfigure}
    \hfill
    \begin{subfigure}[t]{0.32\textwidth}
        \centering
        \includegraphics[width=\textwidth]{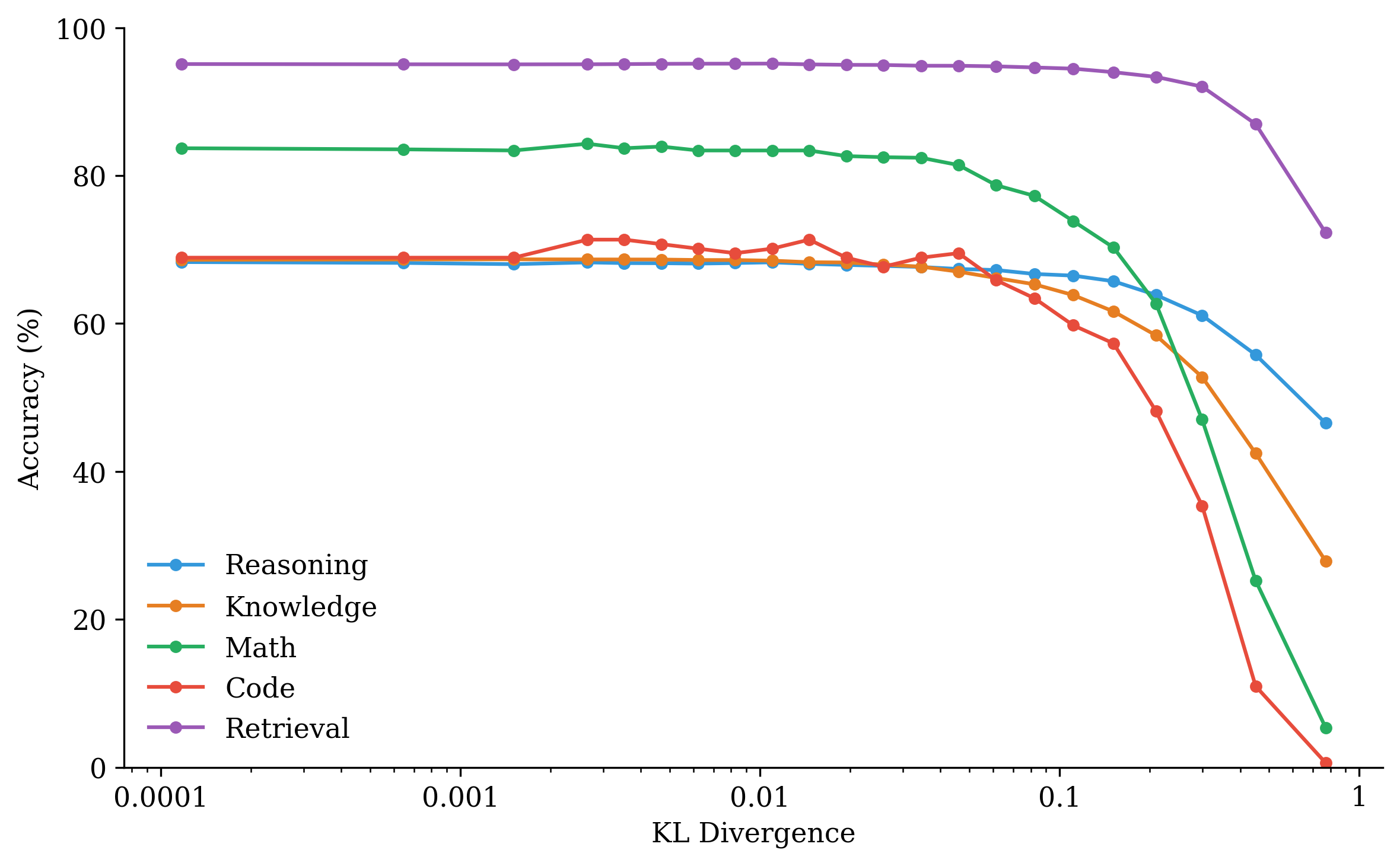}
        \caption{Llama-3.1-8B-Instruct}
    \end{subfigure}
    \hfill
    \begin{subfigure}[t]{0.32\textwidth}
        \centering
        \includegraphics[width=\textwidth]{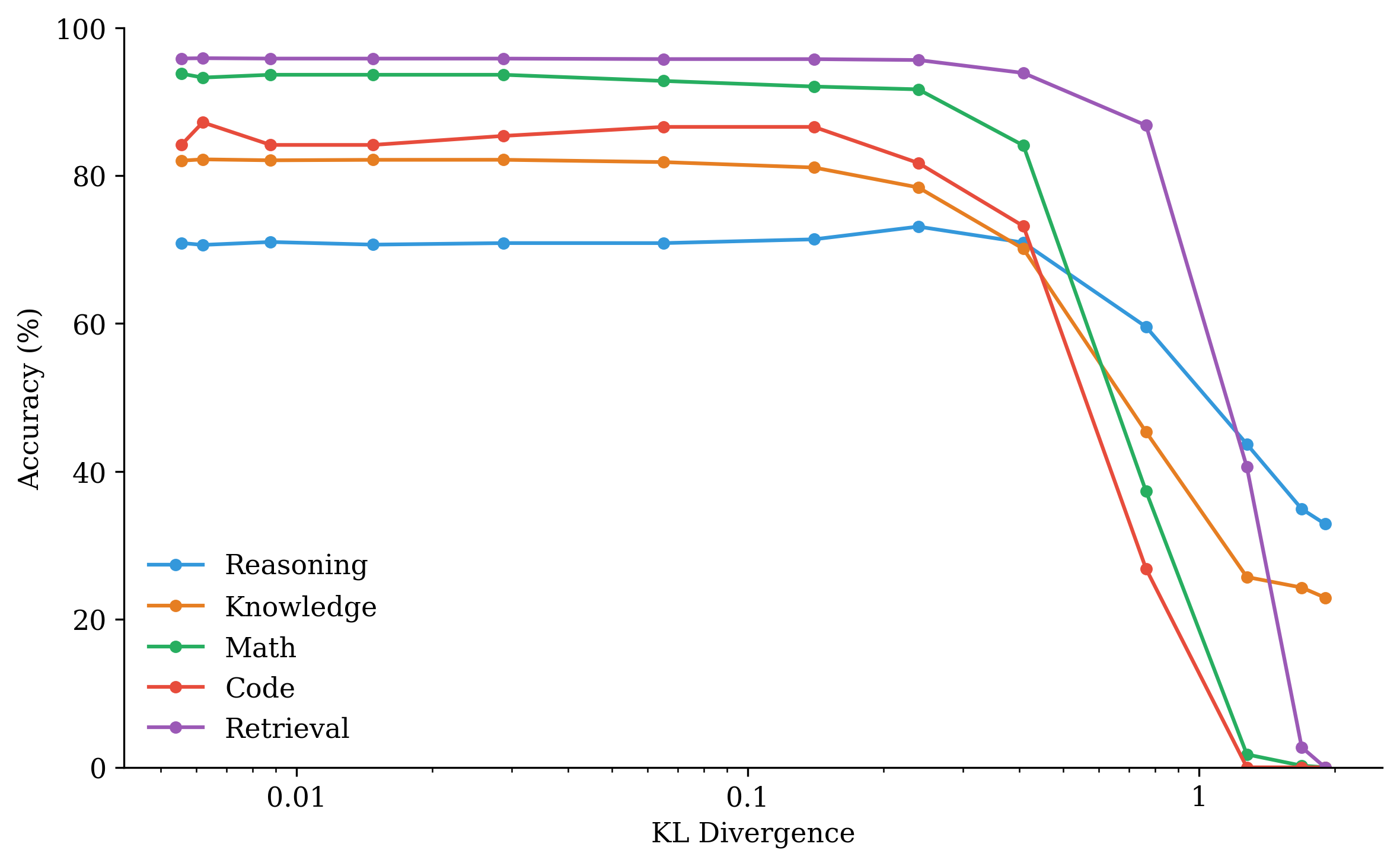}
        \caption{Llama-3.3-70B-Instruct}
    \end{subfigure}
    
    \vspace{0.5em}
    
    \begin{subfigure}[t]{0.32\textwidth}
        \centering
        \includegraphics[width=\textwidth]{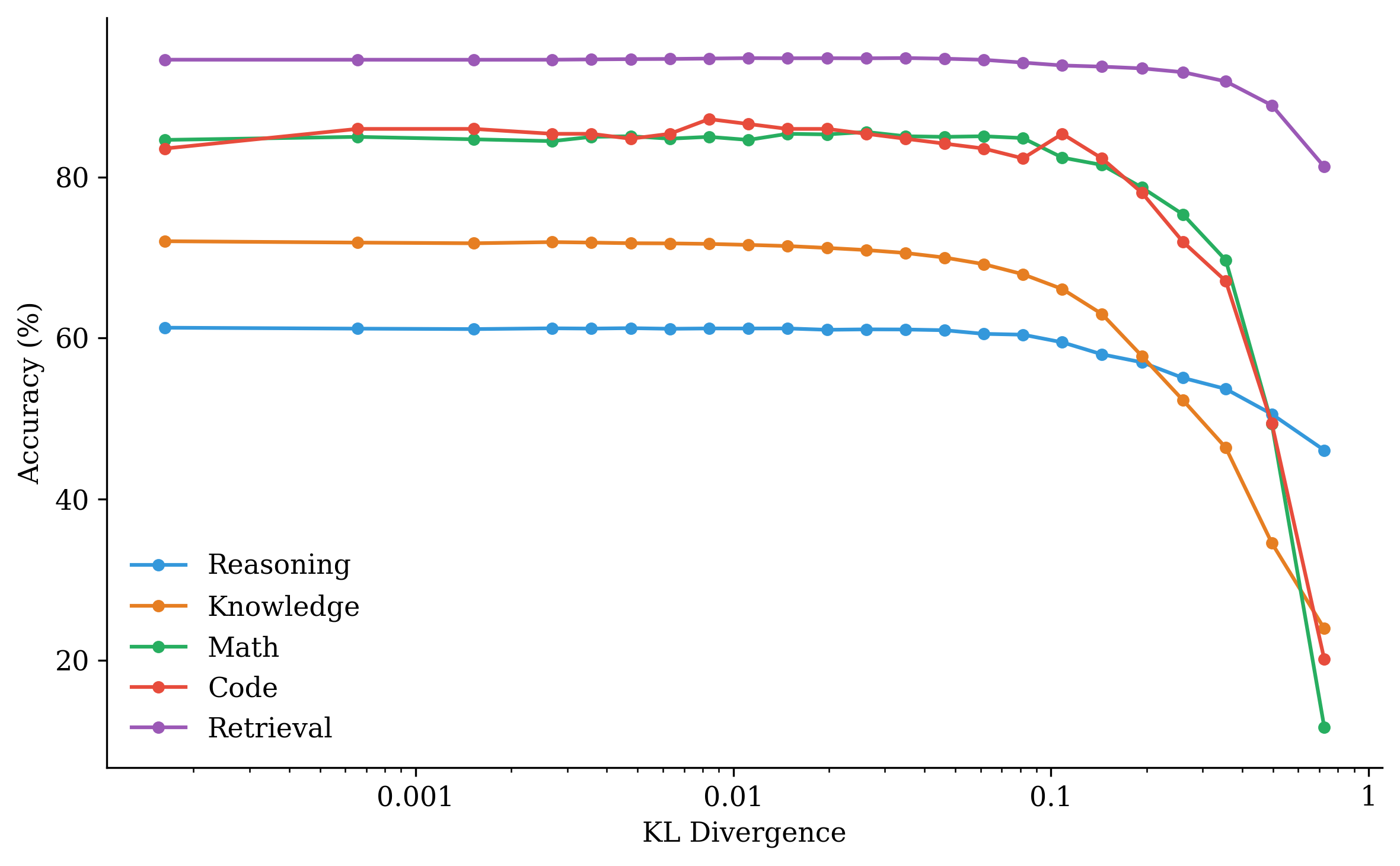}
        \caption{Qwen3-8B}
    \end{subfigure}
    \hfill
    \begin{subfigure}[t]{0.32\textwidth}
        \centering
        \includegraphics[width=\textwidth]{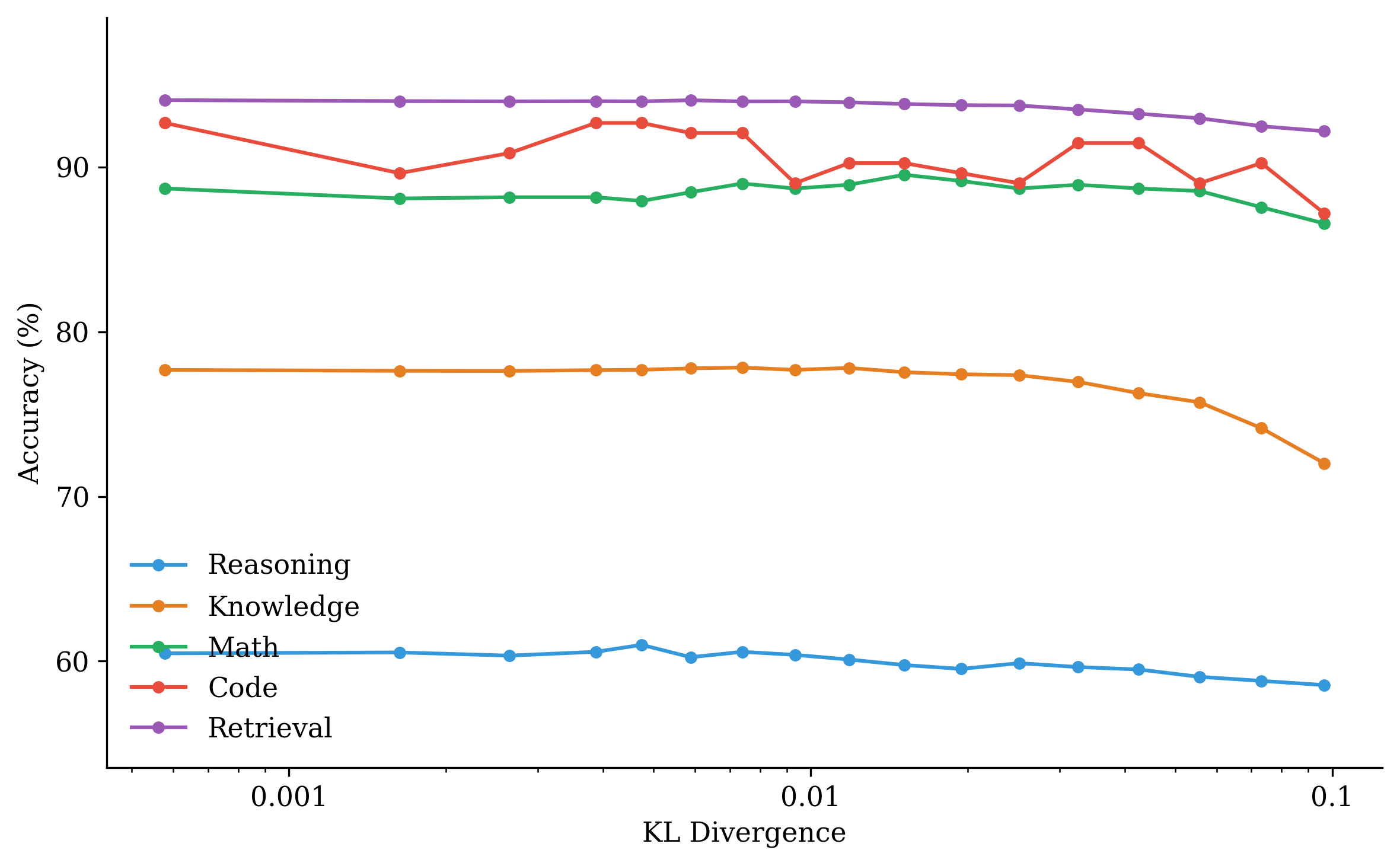}
        \caption{Qwen3-30B-A3B}
    \end{subfigure}
    \hfill
    \begin{subfigure}[t]{0.32\textwidth}
        \centering
        \includegraphics[width=\textwidth]{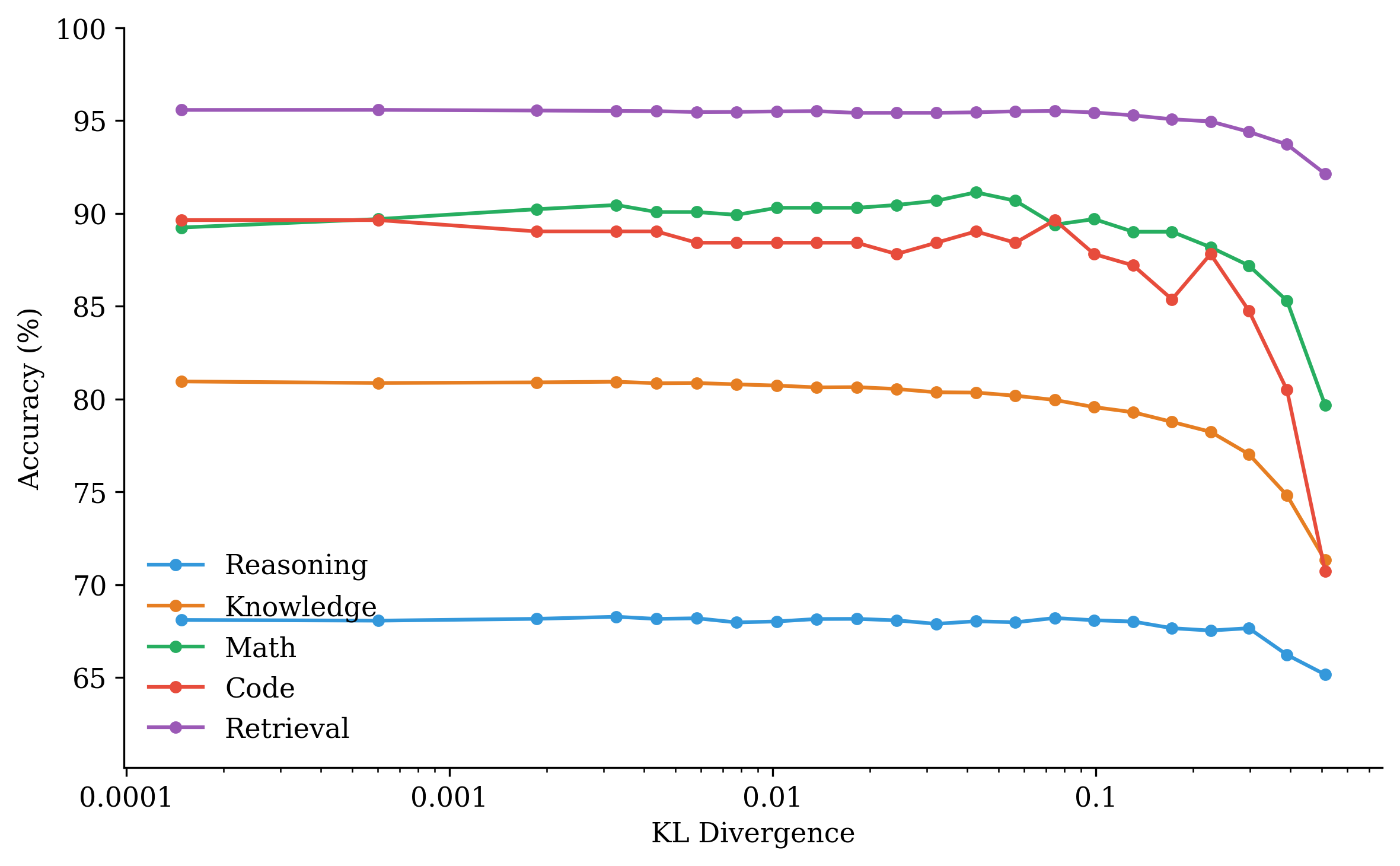}
        \caption{Qwen3-32B}
    \end{subfigure}
    
    \caption{Zero-shot accuracy versus KL divergence across task categories for different model architectures and sizes. Accuracy remains stable until KL divergence exceeds approximately 0.01: distribution-lossless (DL) compression implies task-lossless (TL) compression, but TL can be achieved at lower bitwidths since benchmarks are less sensitive than distributional metrics. Larger models tolerate greater distributional shift before degradation. The x-axis uses logarithmic scale.}

    \label{fig:kl_vs_acc}
\end{figure}
\section{Comparison to State-of-the-Art Quantizers}
\label{app:sota}

This appendix details the evaluation behind Table~\ref{tab:sota-comparison}. We compare against two strong per-layer quantizers that target the near-lossless frontier: QTIP~\citep{tseng2024qtip}, a trellis-coded quantizer, and YAQA~\citep{tseng2025yaqa}, which uses full-model-KL-aware Hessian sketching. Both are orthogonal to SLQ in the sense that they replace the per-layer quantizer while leaving the allocation question open; our framework would apply on top of either, at the cost of substantially heavier system support.

\noindent\textbf{Setup.} We evaluate the publicly released checkpoints for Llama-3.1-8B-Instruct and Llama-3.3-70B-Instruct rather than quoting published numbers, so that tasks, prompts, sampling parameters, seeds and the KL/EAR calibration set match those used for our own models exactly (Section~\ref{sec:experiments}). Task mean recovery is the average over the seven benchmarks of Table~\ref{tab:main-accuracy}, relative to the BF16 baseline, under non-greedy decoding averaged over 3 seeds. KL and EAR are computed on the same calibration positions as for SLQ, restricted to the top-$K{=}10$ tokens.

\noindent\textbf{Findings.} Neither method reaches the distribution-lossless regime at any tested uniform bitwidth. On Llama-3.1-8B, QTIP at 4-bit attains 97.94\% mean recovery, which misses the task-lossless threshold of 99\% outright, with EAR $0.9519$; at 3-bit the model is visibly lossy (96.00\%, EAR $0.9165$). On Llama-3.3-70B, both QTIP (99.9\%) and YAQA (99.5\%) clear the task-lossless bar at 4-bit, but their EAR values of $0.9544$ and $0.9622$ remain far below the DL threshold of $0.99$. SLQ reaches task-losslessness on the same model at 3.59\,bpp, i.e.\ it clears the same threshold at a lower bitwidth, and reaches DL at 6.55\,bpp.

These quantizers are not weak: at a fixed 4-bit budget they are considerably stronger than GPTQ, which is why we adopt them as the reference point. The uniform bit budget is instead selected without reference to any fidelity target, and therefore overshoots it on one model while undershooting it on another. Choosing the bitwidth from the fidelity target instead, which is what SLQ does, is what makes the near-lossless operating point reachable at all.

\subsection{Comparison to Non-Uniform Allocators}
\label{app:allocators}

Figure~\ref{fig:allocator-comparison} accompanies the discussion in Section~\ref{sec:accuracy}. We compare SLQ against IMPQ~\citep{zhao2025impq} on Qwen3-8B, sweeping both methods across bitrates and recording average task recovery and EAR at each operating point. IMPQ is run in its published configuration, which restricts allocation to the binary choice $\{4, 8\}$.

The two panels separate the two effects. On average task recovery the methods are close above ${\sim}4$\,bpp, both saturating near the BF16 baseline; benchmark accuracy simply cannot resolve configurations at this level of fidelity, which is the motivation for adopting EAR as the primary search objective. On EAR the curves separate, and IMPQ's binary allocation does not reach the DL threshold anywhere in the tested range, consistent with the bitwidth-range ablation in Section~\ref{sec:accuracy}. Beyond the gap itself, neither IMPQ nor the other non-uniform allocators provide a mechanism to identify where losslessness begins: locating the green operating points with a budget-driven method requires sweeping the entire curve and running benchmark evaluations at every budget, whereas SLQ obtains them directly from a fidelity constraint evaluated on calibration data.

\begin{figure}[H]
  \centering
  \begin{subfigure}[t]{0.48\linewidth}
    \centering
    \includegraphics[width=\linewidth]{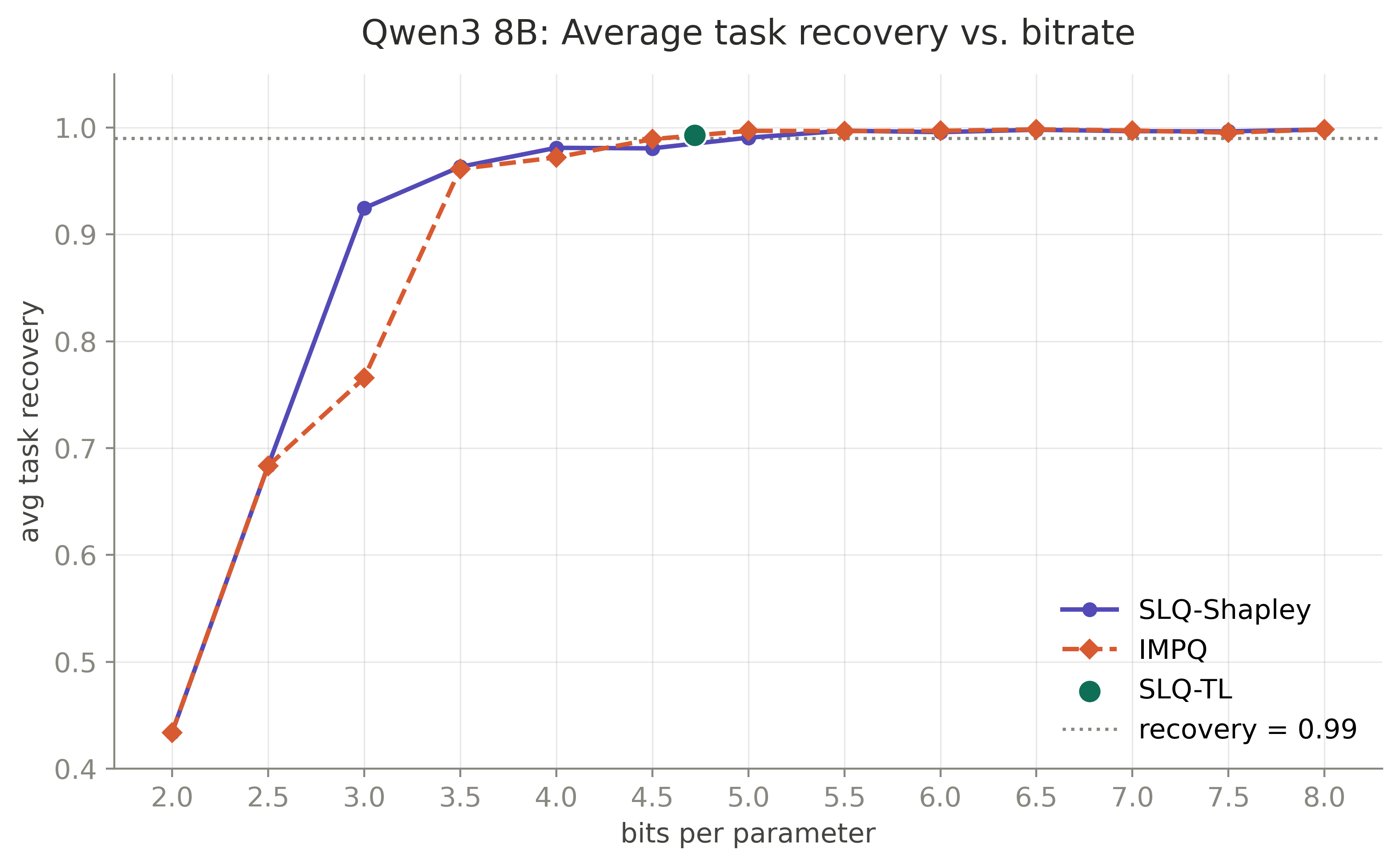}
    \caption{Average task recovery vs.\ bitrate.}
    \label{fig:impq-recovery}
  \end{subfigure}
  \hfill
  \begin{subfigure}[t]{0.48\linewidth}
    \centering
    \includegraphics[width=\linewidth]{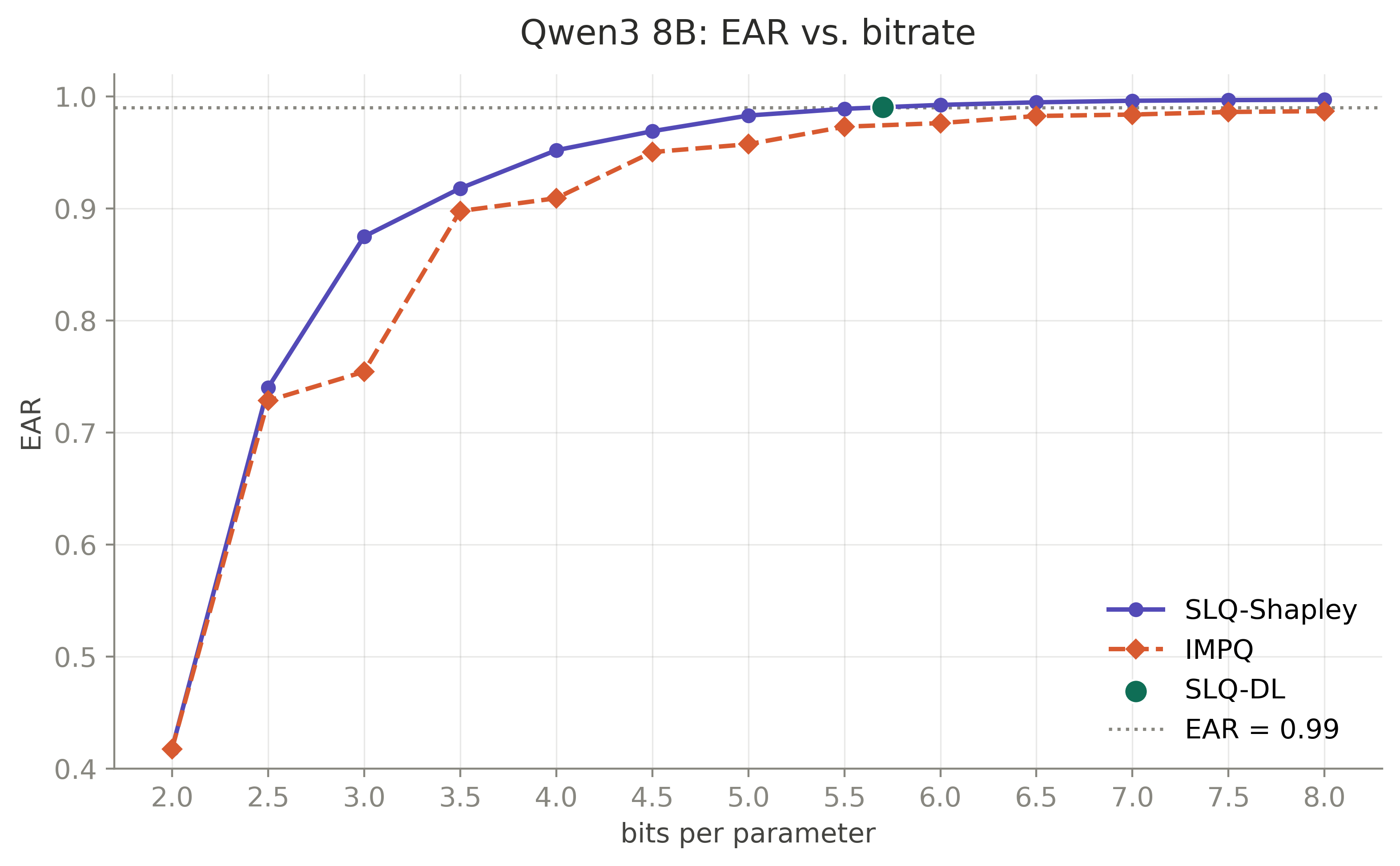}
    \caption{EAR vs.\ bitrate.}
    \label{fig:impq-ear}
  \end{subfigure}
  \caption{SLQ versus the IMPQ non-uniform allocator on Qwen3-8B. Green dots mark the SLQ-TL (a) and SLQ-DL (b) operating points selected by our fidelity-targeted search; dotted lines mark the corresponding thresholds. Task recovery saturates for both methods, while EAR separates them.}
  \label{fig:allocator-comparison}
\end{figure}

\section{Scope and Limitations}
\label{app:limitations}

This appendix expands on the three limitations summarized in the conclusion.

\noindent\textbf{Calibration cost.} Multi-bitwidth Shapley estimation runs $O(P \cdot M \cdot |\mathcal{B}|)$ forward passes, against $O(M \cdot |\mathcal{B}|)$ for the linear estimator of~\citet{malinovskii2025higgs}. The ratio of $10$--$100\times$ is independent of model size, but it multiplies a per-pass cost that is not. On an 8B model this amounts to roughly 2.5\,h per bitwidth on a single A6000, or ${\sim}15$\,h across the six non-reference bitwidths; the games are independent, so the wall-clock cost divides by the number of available GPUs. At $100$B$+$ scale the absolute cost becomes prohibitive and linear estimation is the appropriate choice.

The trade-off is compute against bitwidth rather than compute against fidelity. Both estimators drive the search to the same fidelity target, so both produce statistically-lossless models; Shapley's advantage is that the resulting model is slightly smaller, an advantage concentrated in the 5.5--6.5\,bpp band where allocation decisions are tightest (Appendix~\ref{app:detailed-results}). Linear estimation therefore costs bitwidth efficiency, but not losslessness.

\noindent\textbf{Weight-and-activation quantization.} SLQ is agnostic to activation quantization. Sensitivity estimation already incorporates activation noise, and neither the ILP formulation nor the EAR objective changes when activations are quantized, so W$+$A requires no algorithmic modification. We nonetheless report W$+$A only at small scale, because hardware-supported activation formats remain far more limited than weight formats: the available options force a coarser grid than the full $\{2,\dots,8\}$ range on which near-lossless compression depends (Appendix~\ref{app:detailed-results}). This is a property of the current kernel ecosystem rather than of the method, and the restriction lifts as activation format support matures.

\noindent\textbf{Choice of per-layer quantizer.} We use GPTQ throughout because it is well-tuned and the de facto standard for vLLM deployment. The $\gamma^2$ law and the TL/DL framework apply to any layer-wise quantizer, including vector- and trellis-coded methods. In the near-lossless regime, non-uniform allocation and grid asymmetry are the dominant factors; a higher-fidelity per-layer quantizer helps at the margin, as Table~\ref{tab:sota-comparison} indicates, but does not by itself move a uniform configuration across the losslessness boundary.

\noindent\textbf{Attention architectures.} All evaluated models (the Qwen3 family and Llama-3.3-70B) use grouped-query attention. Attention projections account for a small fraction of total parameters, so differences in per-group sensitivity structure between GQA and multi-head attention do not materially affect the resulting average bitwidth or throughput.

\section{Mixture-of-Experts Models}
\label{app:moe}

To test whether the SLQ pipeline transfers to sparsely-activated architectures, we evaluate Qwen3.5-35B-A3B (35B total parameters, 3B active), the successor to the MoE model appearing in Figure~\ref{fig:kl_vs_acc}. We apply the pipeline without modification: expert and shared layers are partitioned into groups exactly as dense layers are, sensitivity is estimated with the same multi-bitwidth Shapley procedure, and allocation uses the same ILP under the same fidelity constraints. No MoE-specific handling (expert-aware grouping, routing-aware calibration, or per-expert bit budgets) was introduced.

\begin{table}[H]
\centering
\caption{SLQ on Qwen3.5-35B-A3B (35B total, 3B active). Evaluated with the LM Evaluation Harness under its log-likelihood protocol, so absolute scores are not directly comparable to the generative results of Table~\ref{tab:main-accuracy}; recovery rates are. Both configurations lie within sampling variance of the BF16 baseline on every benchmark.}
\label{tab:moe-results}
\small
\begin{tabular}{@{}lccc@{}}
\toprule
Task & BF16 & SLQ-TL (3.73 bpp) & SLQ-DL (5.75 bpp) \\
\midrule
ARC-Easy       & 0.8392 & 0.8325 & 0.8405 \\
ARC-Challenge  & 0.5887 & 0.5836 & 0.5904 \\
HellaSwag      & 0.6234 & 0.6225 & 0.6241 \\
PIQA           & 0.8281 & 0.8264 & 0.8254 \\
WinoGrande     & 0.7388 & 0.7474 & 0.7451 \\
MMLU           & 0.8266 & 0.8265 & 0.8276 \\
GSM8K          & 0.8491 & 0.8901 & 0.8355 \\
\midrule
Mean recovery  & -- & \textbf{99.8\%} & \textbf{100.1\%} \\
\bottomrule
\end{tabular}
\end{table}

Table~\ref{tab:moe-results} reports per-task results over the seven benchmarks we ran for this model. SLQ-TL reaches 99.8\% mean recovery at 3.73\,bpp and SLQ-DL reaches 100.1\% at 5.75\,bpp. The per-task spread behaves as it does for the dense models: individual benchmarks fluctuate in both directions around the baseline (GSM8K is the widest, in line with its known run-to-run variance under stochastic sampling), while the aggregate stays within sampling variance. The resulting bitwidths are also consistent with the dense results of Table~\ref{tab:main-accuracy}, falling inside the 3.3--4.7\,bpp TL and 5.0--6.6\,bpp DL ranges observed there, which suggests the losslessness thresholds are governed by the fidelity target rather than by whether computation is dense or sparsely routed.

\paragraph{Actual average bits per parameter.}
\label{app:actual_bitwidths}
We report the exact average number of bits per parameter to enable precise comparisons at a given effective bitwidth. For each configuration, the reported value accounts for the true storage cost of every layer, including all quantization-related overheads. As a result, the averages reflect the realized per-parameter bitwidth rather than the nominal setting.

\begin{table}[H]
\centering
\caption{Actual bitwidth by format and nominal bitwidth}
\begin{tabular}{@{}c|cc|cc@{}}
\toprule
         & \multicolumn{2}{c|}{INT} & \multicolumn{2}{c}{NVFP/FP} \\
         & \multicolumn{2}{c|}{(groupsize 128)} & \multicolumn{2}{c}{(groupsize 16)} \\
Nominal  & Asym   & Sym   & Asym & Sym  \\
\midrule
4 & 4.156 & 4.125 & 5.0 & 4.5 \\
5 & 5.156 & 5.125 & --  & --  \\
6 & 6.156 & 6.125 & --  & --  \\
7 & 7.156 & 7.125 & --  & --  \\
8 & 8.156 & 8.125 & 8.0 & 8.0 \\
\bottomrule
\end{tabular}
\end{table}

\subsection{Token-Level Divergence Analysis}
\label{app:token-divergence}

To visualize the practical impact of distribution-lossless (DL) compression,
we perform a token-level comparison between the base Qwen3-8B (FP16)
and the SLQ-DL quantized model (5.70\,bpw, EAR~$= 99\%$).
We generate responses with the base model using greedy decoding,
then teacher-force the identical token sequence through the EAR model
and record every position where the EAR model's greedy next-token prediction differs.
Divergent tokens appear
highlighted as {\small\colorbox{tokredlight}{\color{tokred}\texttt{base}}}
with the EAR model's alternative shown as a {\color{altblue}\textsuperscript{blue}} superscript.

\paragraph{Methodology.}
The goal of this analysis is to measure how often the quantized model
would choose a different token than the base model at each generation step,
i.e.\ how frequently quantization causes the model to ``start diverging''
from the original output.
To measure this cleanly, we use \emph{teacher-forced comparison}:
we first generate a full response from the base model using greedy decoding,
then feed that exact token sequence as input to the EAR model
and record, at every position, whether the EAR model's greedy top-1 prediction
matches the base model's token.
Crucially, the EAR model is always conditioned on the base model's actual tokens,
not its own previous predictions.
This is necessary because in free (autoregressive) generation,
a single divergent token shifts the entire continuation,
making it impossible to attribute later differences to quantization error
versus the natural consequence of conditioning on a different prefix.
Teacher-forcing isolates the per-position effect of quantization:
each reported divergence reflects a position where quantization alone
caused the model to prefer a different token, independent of any
earlier divergences.

\paragraph{General statistics.}
We evaluate 15~diverse prompts spanning code generation, mathematical derivation,
essay writing, logic puzzles, and everyday questions, producing 6{,}531~tokens in total.
Of these, 6{,}407~tokens (98.1\%) are identical between both models;
only 124~tokens (1.9\%) diverge.
Per-prompt agreement ranges from 96.1\% to 99.6\%
with a median of 98.2\%.
Divergences are distributed roughly uniformly across token positions
(mean position 235 out of 512),
confirming that quantization errors do not compound over longer sequences.
All observed divergences are minor stylistic variations---synonym swaps,
formatting changes, or punctuation differences---and none alter factual content.

\paragraph{Most frequently divergent tokens.}
Since 124~divergent tokens are distributed across 6{,}531~generated tokens,
no single token dominates: the most frequent base tokens appear at most 3~times.
Table~\ref{tab:top-divergent} lists the 10~most common base tokens involved in divergences
together with their most frequent substitutions.

\begin{table}[h]
\centering
\caption{Most frequently divergent base tokens across 15~prompts.
Each row shows a base-model token that the EAR model replaces, its occurrence count, and representative substitution(s).}
\label{tab:top-divergent}
\small
\begin{tabular}{@{}lrl@{}}
\toprule
\textbf{Base token} & \textbf{Count} & \textbf{EAR substitution(s)} \\
\midrule
    \texttt{The} & 3 & \texttt{\#\#\#}, \texttt{**}, \texttt{Inside} \\
    \texttt{ (} & 3 & \texttt{ of}, \texttt{,} \\
    \texttt{ the} & 3 & \texttt{ a}, \texttt{ which}, \texttt{ it} \\
    \texttt{ and} & 2 & \texttt{ through}, \texttt{,} \\
    \texttt{ genetic} & 2 & \texttt{ instructions}, \texttt{ the} \\
    \texttt{ of} & 2 & \texttt{ on}, \texttt{:} \\
    \texttt{.\textbackslash{}n} & 2 & \texttt{ of}, \texttt{ into} \\
    \texttt{,} & 2 & \texttt{ to}, \texttt{\textbackslash{}n} \\
    \texttt{ to} & 2 & \texttt{.}, \texttt{-{}-} \\
    \texttt{ **} & 2 & \texttt{ logic}, \texttt{ the} \\
    \bottomrule
\end{tabular}
\end{table}

\paragraph{Per-prompt summary.}
Table~\ref{tab:token-divergence} lists agreement for all 15~prompts.

\begin{table}[h]
\centering
\caption{Token-level agreement between base Qwen3-8B (FP16) and the 99\% EAR model (5.70\,bpw) across 15 diverse prompts with greedy decoding.}
\label{tab:token-divergence}
\scriptsize
\begin{tabular}{@{}p{3.1cm}lrrr@{}}
\toprule
\textbf{Prompt} & \textbf{Type} & \textbf{Tok.} & \textbf{Agr.} & \textbf{Div.} \\
\midrule
    Recipe for cookies & Normal & 268 & 99.6\% & 1 \\
    Photosynthesis & Q\&A & 138 & 99.3\% & 1 \\
    Binary search tree & Code & 512 & 99.2\% & 4 \\
    Determinant intuition & Reason. & 512 & 99.0\% & 5 \\
    SQL top-5 customers & Code & 446 & 98.9\% & 5 \\
    Geometric series sum & Reason. & 512 & 98.8\% & 6 \\
    Merge intervals & Code & 448 & 98.4\% & 7 \\
    Wolf, goat, cabbage & Logic & 512 & 98.2\% & 9 \\
    FDR vs.\ Reagan & Essay & 512 & 98.0\% & 10 \\
    Python vs.\ Java & Code & 512 & 97.8\% & 11 \\
    Themes of isolation & Essay & 512 & 97.7\% & 12 \\
    Byzantine Generals & Reason. & 512 & 97.7\% & 12 \\
    Romeo and Juliet & Summ. & 259 & 96.9\% & 8 \\
    Car engine for kids & Q\&A & 364 & 96.4\% & 13 \\
    mRNA vaccines & Reason. & 512 & 96.1\% & 20 \\
    \midrule
    \textbf{Total (15 prompts)} & & \textbf{6{,}531} & \textbf{98.1\%} & \textbf{124} \\
    \bottomrule
\end{tabular}
\end{table}

\paragraph{Representative examples.}
Three examples follow, chosen to cover different domains.
All three exceed 96\% token-level agreement.

\vspace{2mm}

\noindent\textbf{Example 1} \textit{(normal text, 364 tokens, 13 divergent, 96.4\% agreement)}
\par\vspace{1mm}
\begin{promptframe}
Explain how a car engine works to a 10-year-old.
\end{promptframe}
\vspace{-2mm}
\begin{responseframe}
Sure! Let me explain how a car engine works in a way that's easy to understand, like you're\tokdiff{ }{ learning}10 years old.  Imagine your car is like a big toy that runs on gas, like the kind you put in your gas tank. Inside the car, there's something called an **engine**. The engine is like the heart of the car\tokdiff{---it}{.} helps it move.  \tokdiff{The}{Inside} engine has a bunch of small parts that move around, kind of like the wheels on a toy car. These parts are called **\tokdiff{c}{p}ylinders**. Each cylinder is like a tiny\tokdiff{ cylinder}{ chamber} (\tokdiff{that}{like}'s why they're called cylinders!)\tokdiff{ that}{ where} moves up and down.  Here's what happens:  1. **Gas is put in**: You put gas in the tank, and it goes to the engine. 2. **Spark plug lights it up**: Inside the cylinder, there's a spark plug that lights the gas on fire, like a tiny fire\tokdiff{cr}{ in}acker. 3. **Fire makes power**: When the gas burns, it makes a\tokdiff{ lot}{ big} of heat and pressure. This pressure pushes the cylinder down, which turns a rod connected to the wheels. 4. **Wheels turn**: The rod turns a big wheel called the **crankshaft**, which makes the wheels of the car turn. 5. **Repeat\tokdiff{**:}{ over} The process repeats over and over again, like a big, powerful\tokdiff{ heartbeat}{ toy}, and that's how the car moves!  So, the engine is like a big, powerful machine that uses gas to make the car go. It's kind of like a magic box that turns gas into motion!  Want to know what happens when the car stops?\tokdiff{ Then}{ The} the engine stops, and the car doesn't move anymore. But when you press the gas pedal,\tokdiff{ the}{ it} engine starts working again!  Does that make sense?
\end{responseframe}

\vspace{3mm}

\noindent\textbf{Example 2} \textit{(mathematical derivation, 320 tokens shown, 2 divergent, 98.8\% agreement)}
\par\vspace{1mm}
\begin{promptframe}
Derive the formula for the sum of an infinite geometric series. Under what conditions does the series converge?
\end{promptframe}
\vspace{-2mm}
\begin{responseframe}
To **derive the formula for the sum of an infinite geometric series**, we start with the definition of a geometric series.  ---  \#\#\# **Definition of a Geometric Series**  A **geometric series** is a series of the form:  \$\$ S = a + ar + ar\textasciicircum{}2 + ar\textasciicircum{}3 + \textbackslash{}cdots \$\$  Where: - \$ a \$ is the **first term**, - \$ r \$ is the **common ratio** between successive terms.  This series continues **indefinitely**, so it is an **infinite geometric series**.  ---  \#\#\# **Der\tokdiff{ \$}{ as} the Formula**  Lets denote the sum of the infinite geometric series as \$ S \$:  \$\$ S = a + ar + ar\textasciicircum{}2 + ar\textasciicircum{}3 + \textbackslash{}cdots \$\$  Now, multiply both sides of the equation by the common ratio \$ r \$:  \$\$ rS = ar + ar\textasciicircum{}2 + ar\textasciicircum{}3 + ar\textasciicircum{}4 + \textbackslash{}cdots \$\$  Now subtract the second equation from the first:  \$\$ S - rS = (a + ar + ar\textasciicircum{}2 + ar\textasciicircum{}3 + \textbackslash{}cdots) - (ar + ar\textasciicircum{}2 + ar\textasciicircum{}3 + \textbackslash{}cdots) \$\$  On the right-hand side, all terms cancel except the first term \$ a \$:  \$\$ S(1 - r) = a \$\$  Now solve for \$ S \$:  \$\$ S = \textbackslash{}frac\{a\}\{1 - r\} \$\$  ---  \#\#\#  **Final\tokdiff{valid}{only}:**  \$\$ \textbackslash{}boxed\{S = [\ldots]
\end{responseframe}

\vspace{3mm}

\noindent\textbf{Example 3} \textit{(distributed systems/reasoning, 280 tokens shown, 7 divergent, 97.7\% agreement)}
\par\vspace{1mm}
\begin{promptframe}
Explain the Byzantine Generals Problem in distributed computing. Why is it impossible to solve with one traitorous general?
\end{promptframe}
\vspace{-2mm}
\begin{responseframe}
The **Byzantine Generals Problem** is a classic problem in **distributed computing** that illustrates the challenges\tokdiff{ consensus}{ **} achieving **consensus** in a network of processes (or "generals") when\tokdiff{ is}{ was} of them may\tokdiff{ story}{ historical} **faulty** or **malicious**. It was introduced by **Leslie Lamport, Robert Shostak, and Marshall Pease** in a 1982 paper titled *"The Byzantine Generals Problem."*  ---  \#\#  **The Problem: The Byzantine Generals**  Imagine a group of generals, each commanding a division of the army, surrounding a city. They must decide whether to **attack** or **retreat**. However, they can only communicate with each other through **messengers**, and some of the generals may be **traitors** (i.e., they may send false messages or act against the interests of the group).  The goal is for all loyal generals\tokdiff{**}{ by} **agree on the same plan** (attack or retreat), even if some are traitors. The problem is to determine whether this is possible,\tokdiff{ communication}{ messages} under what conditions.  ---  \#\#  **Key Assumptions**  - There are **n** generals (processes). - At most **m** of them are **traitors** (faulty or malicious). -\tokdiff{ There}{ Nodes}\tokdiff{ are}{ is} **synchronous** (messages are delivered in a finite [\ldots]
\end{responseframe}

\section{Thinking Token Inflation Under Quantization}
\label{app:thinking-tokens}

We measure whether quantized models generate more thinking tokens (\texttt{<think>...</think>}) than the BF16 baseline to reach the same answer on Qwen3-8B, GSM8K (1319 problems).

\noindent\textbf{Setup.}
Inference uses vLLM (v0.17.1) with identical sampling across all configs (temperature=0.6, top\_p=0.95, top\_k=20). Each query is run 3 times (seeds 43, 44, 45) for within-query variance, yielding 3957 generations per config. Maximum generation length is 8192 tokens with context length 16384. Quantized weights are fake-quantized (dequantized) with asymmetric integer quantization, group size 128. Hardware: NVIDIA L40S (46\,GB).

\noindent\textbf{Aggregate statistics.}
Table~\ref{tab:think-aggregate} reports per-config statistics over all 3957 generations.

\begin{table}[H]
\centering
\caption{Thinking token statistics on GSM8K (Qwen3-8B, 1319 problems $\times$ 3 seeds).}
\label{tab:think-aggregate}
\small
\begin{tabular}{@{}lcccccc@{}}
\toprule
Config & Bits/param & Mean & Median & Std & Accuracy & Truncated \\
\midrule
BF16      & 16.00 & 2051 & 1548 & 1509 & 94.1\% & 40 \\
SLQ-DL    & 5.70  & 2051 & 1555 & 1518 & 93.8\% & 53 \\
SLQ-TL    & 4.72  & 2097 & 1619 & 1504 & 94.7\% & 50 \\
Uniform-4 & 4.00  & 2111 & 1627 & 1508 & 94.6\% & 54 \\
\bottomrule
\end{tabular}
\end{table}

\noindent\textbf{Paired comparison vs BF16.}
Per-query means are computed by averaging over 3 seeds, and compared to BF16 via paired Wilcoxon signed-rank test (Table~\ref{tab:think-paired}).

\begin{table}[H]
\centering
\caption{Paired thinking token comparison vs BF16 (N=1319 queries).}
\label{tab:think-paired}
\small
\begin{tabular}{@{}lcccc@{}}
\toprule
Config & Mean diff & Median ratio & Wilcoxon $p$ & Cohen's $d$ \\
\midrule
SLQ-DL    & $-1$  & 0.995 & $8.6 \times 10^{-1}$ & $-0.001$ \\
SLQ-TL    & $+45$ & 1.037 & $3.7 \times 10^{-7}$ & $+0.076$ \\
Uniform-4 & $+59$ & 1.036 & $5.2 \times 10^{-6}$ & $+0.097$ \\
\bottomrule
\end{tabular}
\end{table}

\noindent\textbf{Jointly-correct queries.}
We restrict to queries where both BF16 and the quantized model answer correctly on all 3 runs, testing whether inflation persists even when both models reach the right answer (Table~\ref{tab:think-joint}).

\begin{table}[H]
\centering
\caption{Thinking tokens on jointly-correct queries.}
\label{tab:think-joint}
\small
\begin{tabular}{@{}lccccc@{}}
\toprule
Config & N queries & BF16 mean & Quant mean & Diff & Wilcoxon $p$ \\
\midrule
SLQ-DL    & 1195 & 1800 & 1798 & $-2$  & $6.3 \times 10^{-1}$ \\
SLQ-TL    & 1195 & 1789 & 1839 & $+50$ & $2.4 \times 10^{-7}$ \\
Uniform-4 & 1192 & 1799 & 1861 & $+62$ & $2.0 \times 10^{-6}$ \\
\bottomrule
\end{tabular}
\end{table}

\noindent\textbf{Thinking tokens by correctness.}
Wrong answers consume 2.4--2.6$\times$ more thinking tokens across all configs (Table~\ref{tab:think-correctness}).

\begin{table}[H]
\centering
\caption{Mean thinking tokens by answer correctness.}
\label{tab:think-correctness}
\small
\begin{tabular}{@{}lccc@{}}
\toprule
Config & Correct mean & Wrong mean & Ratio (wrong/correct) \\
\midrule
BF16      & 1893 & 4564 & 2.41 \\
SLQ-DL    & 1882 & 4605 & 2.45 \\
SLQ-TL    & 1934 & 5003 & 2.59 \\
Uniform-4 & 1945 & 4987 & 2.56 \\
\bottomrule
\end{tabular}
\end{table}

\noindent\textbf{Within-query variance.}
Coefficient of variation (CV = std\,/\,mean) across 3 runs of the same query (Table~\ref{tab:think-cv}). The 2--3\% inflation from lossy quantization is small relative to natural run-to-run variance (${\sim}19\%$), but statistically significant with $N{=}1319$ paired comparisons.

\begin{table}[H]
\centering
\caption{Within-query coefficient of variation across 3 seeds.}
\label{tab:think-cv}
\small
\begin{tabular}{@{}lcc@{}}
\toprule
Config & Mean CV & Median CV \\
\midrule
BF16      & 18.8\% & 16.9\% \\
SLQ-DL    & 18.8\% & 16.9\% \\
SLQ-TL    & 19.3\% & 17.3\% \\
Uniform-4 & 19.5\% & 17.3\% \\
\bottomrule
\end{tabular}
\end{table}

\noindent\textbf{Token budget breakdown.}
The inflation is entirely in the thinking portion; non-thinking output length is stable across configs (Table~\ref{tab:think-budget}).

\begin{table}[H]
\centering
\caption{Mean token budget breakdown per generation.}
\label{tab:think-budget}
\small
\begin{tabular}{@{}lcccc@{}}
\toprule
Config & Think & Non-think & Total & Think \% \\
\midrule
BF16      & 2051 & 305 & 2356 & 87.1\% \\
SLQ-DL    & 2051 & 306 & 2356 & 87.0\% \\
SLQ-TL    & 2097 & 302 & 2399 & 87.4\% \\
Uniform-4 & 2111 & 286 & 2397 & 88.1\% \\
\bottomrule
\end{tabular}
\end{table}

\noindent\textbf{Thinking token percentiles.}
Table~\ref{tab:think-percentiles} shows the distribution across generations.

\begin{table}[H]
\centering
\caption{Thinking token percentiles across all generations.}
\label{tab:think-percentiles}
\small
\begin{tabular}{@{}lccccccc@{}}
\toprule
Config & P10 & P25 & P50 & P75 & P90 & P95 & P99 \\
\midrule
BF16      & 803 & 1085 & 1548 & 2462 & 3993 & 5362 & 8179 \\
SLQ-DL    & 802 & 1073 & 1555 & 2466 & 3944 & 5371 & 8192 \\
SLQ-TL    & 841 & 1101 & 1619 & 2490 & 4061 & 5337 & 8192 \\
Uniform-4 & 831 & 1117 & 1627 & 2554 & 4062 & 5408 & 8192 \\
\bottomrule
\end{tabular}
\end{table}

\subsection{Detailed Quantization Results}
\label{app:detailed-results}

This section presents comprehensive quantization results across bitwidths, number formats, and quantization schemes. All results are evaluated on Llama-3.1-8B-Instruct using 512 calibration samples. The metrics reported here (EAR, KL, Margin, PPL) are distributional fidelity measures used for distribution-lossless (DL) compression. Task-lossless (TL) compression targets benchmark recovery instead and can be achieved at lower bitwidths; see the main results in Section~\ref{sec:accuracy}.

\paragraph{Metrics.} We report four key metrics for quantization quality:
\begin{itemize}
\item \textbf{EAR}: The expected acceptance rate, measuring the overlap in output probability mass between BF16 and quantized models. This equals $\sum_k \min(p_k, q_k)$ where $p$ and $q$ are the top-10 token distributions. Higher EAR indicates closer alignment to the original output distribution.
\item \textbf{Margin at Disagreement}: The average probability gap on tokens where the quantized and BF16 models disagree on their top prediction. Lower margin means flips occur only on close calls, indicating smaller quantization error.
\item \textbf{KL Divergence}: The KL divergence between BF16 and quantized output distributions on the calibration set.
\item \textbf{Perplexity (PPL)}: The perplexity of the quantized model relative to BF16, computed from the calibration set.
\end{itemize}

\paragraph{Format and Bitwidth Notations.} All tables use asymmetric integer quantization (Asym) or symmetric integer quantization (Sym) with group size 128. Bitwidth values (e.g., ``4.50'') include the overhead from scales and zero-points. The format notation ``4-8'' means only bitwidths 4 and 8 are available for allocation, while ``4-5-6-7-8'' means all intermediate bitwidths are available.

\paragraph{Uniform INT4 baselines.} Table~\ref{tab:int4-ref} shows uniform 4-bit integer quantization results for asymmetric and symmetric schemes. These serve as single-bitwidth reference points.

\paragraph{Weights+Activations (W+A) Quantization.} Table~\ref{tab:wa-results} compares weight-only (W) quantization with combined weights+activations (W+A) quantization, both obtained via evolutionary search with adaptive curation (Appendix~\ref{app:evo}) for Llama 3.1 8B Instruct. Weight-only quantization at 5.42 bits achieves excellent recovery across all tasks (97.6\%--100.5\%), with GSM8K, PIQA, and Winogrande exceeding baseline accuracy. Quantizing both weights and activations at a comparable constraint requires a higher bitwidth of 6.50--6.97 bits. At 6.97 bits (Max KL 0.01), the W+A configuration recovers 99.4\%--100.1\% on all tasks. This shows that activations are more sensitive to quantization than weights, requiring additional precision to maintain the same quality level, but near-lossless W+A quantization remains feasible.

\begin{table*}[!ht]
\centering
\caption{INT4 Reference Quantization Results}
\label{tab:int4-ref}
\small
\begin{tabular}{@{}lccccc@{}}
\toprule
Method & BW & EAR $\uparrow$ & Margin $\downarrow$ & KL $\downarrow$ & PPL $\uparrow$ \\
\midrule
Uniform (Asym) & 4.156 & 0.958 & 0.113 & 0.0369 & 0.956 \\
Uniform (Sym) & 4.156 & 0.946 & 0.148 & 0.0576 & 0.926 \\
\bottomrule
\end{tabular}
\end{table*}

\begin{table*}[!ht]
\centering
\caption{Weights+Activations Quantization Results (Evolutionary Search with Adaptive Curation) for Llama 3.1 8B Instruct}
\label{tab:wa-results}
\resizebox{\textwidth}{!}{
\begin{tabular}{@{}lccccccccc@{}}
\toprule
Method & Type & Max KL & BW & PPL & HellaSwag $\uparrow$ & Winogrande $\uparrow$ & PIQA $\uparrow$ & GSM8K $\uparrow$ & ARC-C $\uparrow$ \\
\midrule
FP16 & -- & -- & 16 & 9.10 & 57.41 & 67.48 & 79.76 & 83.17 & 52.56 \\
\midrule
Evo Search & W & 0.0125 & 5.42 & 9.35 & 57.37 (99.9\%) & 67.80 (100.5\%) & 80.20 (100.5\%) & 83.55 (100.5\%) & 51.28 (97.6\%) \\
\midrule
Evo Search & W+A & 0.015 & 6.50 & 9.37 & 57.02 (99.3\%) & 67.25 (99.6\%) & 79.38 (99.5\%) & 82.03 (98.6\%) & 51.02 (97.1\%) \\
Evo Search & W+A & 0.010 & 6.97 & 9.33 & 57.16 (99.6\%) & 67.48 (100.0\%) & 79.81 (100.1\%) & 83.02 (99.8\%) & 52.22 (99.4\%) \\
\bottomrule
\end{tabular}
}
\end{table*}

\paragraph{Integer quantization with binary bitwidth range.} Tables~\ref{tab:int-asym-4-8} and~\ref{tab:int-sym-4-8} show integer asymmetric and symmetric quantization when only bitwidths 4 and 8 are available for allocation. Uniform baselines at each single bitwidth are included for reference. Linear and Shapley rows show non-uniform allocations at matched average bitwidths.

\begin{table*}[p]
\centering
\caption{INT Asymmetric Quantization Results (4-8 Config)}
\label{tab:int-asym-4-8}
\small
\begin{tabular}{@{}lccccc@{}}
\toprule
Method & BW & EAR $\uparrow$ & Margin $\downarrow$ & KL $\downarrow$ & PPL $\uparrow$ \\
\midrule
\multicolumn{6}{l}{\textit{Uniform Baselines}} \\
\midrule
Uniform & 4.16 & 0.938 & 0.148 & 0.0277 & 0.970 \\
Uniform & 5.16 & 0.964 & 0.083 & 0.0068 & 0.992 \\
Uniform & 6.16 & 0.988 & 0.029 & 0.0017 & 0.998 \\
Uniform & 7.16 & 0.989 & 0.023 & 0.0004 & 1.000 \\
Uniform & 8.16 & 0.992 & 0.021 & 0.0001 & 1.000 \\
\midrule
\multicolumn{6}{l}{\textit{4.5-bit}} \\
\midrule
Linear & 4.50 & \textbf{0.967} & \textbf{0.051} & \textbf{0.0196} & 0.978 \\
Shapley & 4.50 & 0.952 & 0.130 & 0.0199 & \textbf{0.979} \\
\midrule
\multicolumn{6}{l}{\textit{5.5-bit}} \\
\midrule
Linear & 5.50 & \textbf{0.975} & \textbf{0.047} & 0.0145 & \textbf{0.984} \\
Shapley & 5.50 & 0.975 & \textbf{0.047} & \textbf{0.0143} & 0.984 \\
\midrule
\multicolumn{6}{l}{\textit{6.0-bit}} \\
\midrule
Linear & 5.99 & 0.977 & 0.051 & 0.0126 & 0.986 \\
Shapley & 5.99 & \textbf{0.981} & \textbf{0.047} & \textbf{0.0122} & \textbf{0.986} \\
\midrule
\multicolumn{6}{l}{\textit{6.5-bit}} \\
\midrule
Linear & 6.50 & 0.983 & 0.043 & 0.0098 & 0.989 \\
Shapley & 6.50 & \textbf{0.984} & \textbf{0.029} & \textbf{0.0091} & \textbf{0.990} \\
\midrule
\multicolumn{6}{l}{\textit{7.0-bit}} \\
\midrule
Linear & 6.98 & \textbf{0.986} & 0.043 & 0.0073 & \textbf{0.992} \\
Shapley & 6.98 & 0.984 & \textbf{0.029} & \textbf{0.0069} & 0.991 \\
\bottomrule
\end{tabular}
\end{table*}

\begin{table*}[p]
\centering
\caption{INT Symmetric Quantization Results (4-8 Config)}
\label{tab:int-sym-4-8}
\small
\begin{tabular}{@{}lccccc@{}}
\toprule
Method & BW & EAR $\uparrow$ & Margin $\downarrow$ & KL $\downarrow$ & PPL $\uparrow$ \\
\midrule
\multicolumn{6}{l}{\textit{Uniform Baselines}} \\
\midrule
Uniform & 4.12 & 0.938 & 0.145 & 0.0344 & 0.957 \\
Uniform & 5.12 & 0.964 & 0.083 & 0.0087 & 0.991 \\
Uniform & 6.12 & 0.985 & 0.051 & 0.0022 & 0.997 \\
Uniform & 7.12 & 0.989 & 0.023 & 0.0006 & 0.999 \\
Uniform & 8.12 & 0.997 & 0.005 & 0.0001 & 1.000 \\
\midrule
\multicolumn{6}{l}{\textit{4.5-bit}} \\
\midrule
Linear & 4.50 & \textbf{0.955} & \textbf{0.124} & \textbf{0.0236} & \textbf{0.971} \\
Shapley & 4.50 & 0.953 & 0.132 & 0.0250 & 0.970 \\
\midrule
\multicolumn{6}{l}{\textit{5.5-bit}} \\
\midrule
Linear & 5.50 & \textbf{0.967} & \textbf{0.091} & 0.0175 & 0.979 \\
Shapley & 5.50 & 0.966 & 0.117 & \textbf{0.0172} & \textbf{0.979} \\
\midrule
\multicolumn{6}{l}{\textit{6.0-bit}} \\
\midrule
Linear & 6.00 & 0.969 & 0.110 & 0.0151 & 0.982 \\
Shapley & 6.00 & \textbf{0.970} & \textbf{0.091} & \textbf{0.0145} & \textbf{0.983} \\
\midrule
\multicolumn{6}{l}{\textit{6.5-bit}} \\
\midrule
Linear & 6.50 & 0.974 & 0.124 & 0.0116 & 0.986 \\
Shapley & 6.50 & \textbf{0.983} & \textbf{0.059} & \textbf{0.0107} & \textbf{0.987} \\
\midrule
\multicolumn{6}{l}{\textit{7.0-bit}} \\
\midrule
Linear & 6.98 & \textbf{0.986} & \textbf{0.043} & 0.0084 & 0.990 \\
Shapley & 6.98 & 0.980 & 0.059 & \textbf{0.0079} & \textbf{0.990} \\
\bottomrule
\end{tabular}
\end{table*}

\paragraph{Integer quantization with full bitwidth range.} Tables~\ref{tab:int-asym-4-5-6-7-8} and~\ref{tab:int-sym-4-5-6-7-8} show integer asymmetric and symmetric quantization with the full bitwidth set $\{4,5,6,7,8\}$. The additional intermediate bitwidths give the solver finer-grained control over per-group allocation.

\begin{table*}[p]
\centering
\caption{INT Asymmetric Quantization Results (4-5-6-7-8 Config)}
\label{tab:int-asym-4-5-6-7-8}
\small
\begin{tabular}{@{}lccccc@{}}
\toprule
Method & BW & EAR $\uparrow$ & Margin $\downarrow$ & KL $\downarrow$ & PPL $\uparrow$ \\
\midrule
\multicolumn{6}{l}{\textit{Uniform Baselines}} \\
\midrule
Uniform & 4.16 & 0.938 & 0.148 & 0.0277 & 0.970 \\
Uniform & 5.16 & 0.964 & 0.083 & 0.0068 & 0.992 \\
Uniform & 6.16 & 0.988 & 0.029 & 0.0017 & 0.998 \\
Uniform & 7.16 & 0.989 & 0.023 & 0.0004 & 1.000 \\
Uniform & 8.16 & 0.992 & 0.021 & 0.0001 & 1.000 \\
\midrule
\multicolumn{6}{l}{\textit{4.5-bit}} \\
\midrule
Linear & 4.50 & \textbf{0.964} & \textbf{0.074} & \textbf{0.0171} & 0.980 \\
Shapley & 4.50 & 0.960 & 0.115 & 0.0177 & \textbf{0.982} \\
\midrule
\multicolumn{6}{l}{\textit{5.5-bit}} \\
\midrule
Linear & 5.50 & 0.988 & 0.045 & \textbf{0.0041} & \textbf{0.996} \\
Shapley & 5.50 & \textbf{0.991} & \textbf{0.027} & 0.0047 & 0.995 \\
\midrule
\multicolumn{6}{l}{\textit{6.0-bit}} \\
\midrule
Linear & 6.00 & 0.989 & \textbf{0.027} & \textbf{0.0025} & 0.997 \\
Shapley & 6.00 & \textbf{0.991} & 0.029 & 0.0026 & \textbf{0.997} \\
\midrule
\multicolumn{6}{l}{\textit{6.5-bit}} \\
\midrule
Linear & 6.50 & \textbf{0.995} & \textbf{0.023} & \textbf{0.0010} & \textbf{0.999} \\
Shapley & 6.50 & 0.994 & 0.026 & 0.0011 & 0.999 \\
\midrule
\multicolumn{6}{l}{\textit{7.0-bit}} \\
\midrule
Linear & 7.00 & \textbf{0.994} & \textbf{0.021} & \textbf{0.0006} & \textbf{1.000} \\
Shapley & 7.00 & \textbf{0.994} & \textbf{0.021} & 0.0006 & \textbf{1.000} \\
\bottomrule
\end{tabular}
\end{table*}

\begin{table*}[p]
\centering
\caption{INT Symmetric Quantization Results (4-5-6-7-8 Config)}
\label{tab:int-sym-4-5-6-7-8}
\small
\begin{tabular}{@{}lccccc@{}}
\toprule
Method & BW & EAR $\uparrow$ & Margin $\downarrow$ & KL $\downarrow$ & PPL $\uparrow$ \\
\midrule
\multicolumn{6}{l}{\textit{Uniform Baselines}} \\
\midrule
Uniform & 4.12 & 0.938 & 0.145 & 0.0344 & 0.957 \\
Uniform & 5.12 & 0.964 & 0.083 & 0.0087 & 0.991 \\
Uniform & 6.12 & 0.985 & 0.051 & 0.0022 & 0.997 \\
Uniform & 7.12 & 0.989 & 0.023 & 0.0006 & 0.999 \\
Uniform & 8.12 & 0.997 & 0.005 & 0.0001 & 1.000 \\
\midrule
\multicolumn{6}{l}{\textit{4.5-bit}} \\
\midrule
Linear & 4.50 & \textbf{0.963} & \textbf{0.112} & \textbf{0.0205} & 0.974 \\
Shapley & 4.50 & 0.952 & 0.119 & 0.0212 & \textbf{0.975} \\
\midrule
\multicolumn{6}{l}{\textit{5.5-bit}} \\
\midrule
Linear & 5.50 & 0.983 & \textbf{0.059} & \textbf{0.0051} & \textbf{0.994} \\
Shapley & 5.50 & \textbf{0.986} & 0.091 & 0.0054 & 0.993 \\
\midrule
\multicolumn{6}{l}{\textit{6.0-bit}} \\
\midrule
Linear & 6.00 & 0.983 & 0.043 & \textbf{0.0030} & \textbf{0.997} \\
Shapley & 6.00 & \textbf{0.986} & \textbf{0.025} & 0.0031 & 0.996 \\
\midrule
\multicolumn{6}{l}{\textit{6.5-bit}} \\
\midrule
Linear & 6.50 & \textbf{0.992} & \textbf{0.023} & \textbf{0.0013} & 0.999 \\
Shapley & 6.50 & 0.989 & 0.025 & 0.0013 & \textbf{0.999} \\
\midrule
\multicolumn{6}{l}{\textit{7.0-bit}} \\
\midrule
Linear & 7.00 & \textbf{0.992} & 0.015 & \textbf{0.0007} & 0.999 \\
Shapley & 7.00 & \textbf{0.992} & \textbf{0.005} & 0.0007 & \textbf{1.000} \\
\bottomrule
\end{tabular}
\end{table*}

\paragraph{Distribution-lossless (DL) constraint-based allocation.} Table~\ref{tab:int-asym-lossless} shows integer asymmetric results from the binary-search solver targeting a DL quality constraint (EAR threshold) rather than a fixed average bitwidth. Results are shown for both the binary $\{4,8\}$ and full $\{4,5,6,7,8\}$ bitwidth ranges.

\begin{table*}[p]
\centering
\caption{INT Asymmetric Distribution-Lossless (DL) Quantization Results}
\label{tab:int-asym-lossless}
\small
\begin{tabular}{@{}llccccc@{}}
\toprule
Config & Method & BW & EAR $\uparrow$ & Margin $\downarrow$ & KL $\downarrow$ & PPL $\uparrow$ \\
\midrule
\multicolumn{7}{l}{\textit{Uniform Baseline}} \\
\midrule
-- & Uniform & 4.16 & 0.938 & 0.148 & 0.0277 & 0.970 \\
\midrule
\multicolumn{7}{l}{\textit{Lossless Constraint-Based}} \\
\midrule
4-8 & Shapley & 7.74 & 0.981 & 0.027 & 0.0020 & 0.997 \\
4-5-6-7-8 & Shapley & 5.59 & \textbf{0.994} & \textbf{0.020} & 0.0039 & 0.996 \\
4-5-6-7-8 & Shapley & 6.19 & 0.989 & 0.024 & \textbf{0.0018} & \textbf{0.998} \\
\bottomrule
\end{tabular}
\end{table*}

\paragraph{Floating-point quantization.} Tables~\ref{tab:fp-asym-4-8} and~\ref{tab:fp-sym-4-8} use a custom floating-point format based on NVFP4 with an added zeropoint for the asymmetric variant. Only bitwidths 4 and 8 are available. These tables complement the integer results above by showing the same asymmetric vs.\ symmetric comparison in the FP domain.

\begin{table*}[p]
\centering
\caption{FP Asymmetric Quantization Results (4-8 Config)}
\label{tab:fp-asym-4-8}
\small
\begin{tabular}{@{}lccccc@{}}
\toprule
Method & BW & EAR $\uparrow$ & Margin $\downarrow$ & KL $\downarrow$ & PPL $\uparrow$ \\
\midrule
\multicolumn{6}{l}{\textit{Uniform Baselines}} \\
\midrule
Uniform & 5.00 & 0.955 & 0.130 & 0.0264 & 0.970 \\
Uniform & 8.00 & 0.989 & 0.024 & 0.0016 & 0.997 \\
\midrule
\multicolumn{6}{l}{\textit{5.5-bit}} \\
\midrule
Linear & 5.50 & \textbf{0.972} & \textbf{0.066} & 0.0168 & 0.980 \\
Shapley & 5.50 & 0.969 & 0.088 & \textbf{0.0168} & \textbf{0.981} \\
\midrule
\multicolumn{6}{l}{\textit{6.0-bit}} \\
\midrule
Linear & 6.00 & \textbf{0.977} & \textbf{0.051} & \textbf{0.0139} & \textbf{0.984} \\
Shapley & 6.00 & 0.975 & 0.088 & 0.0140 & 0.983 \\
\midrule
\multicolumn{6}{l}{\textit{6.5-bit}} \\
\midrule
Linear & 6.49 & \textbf{0.980} & \textbf{0.042} & \textbf{0.0115} & 0.987 \\
Shapley & 6.50 & 0.977 & 0.051 & 0.0116 & \textbf{0.987} \\
\midrule
\multicolumn{6}{l}{\textit{7.0-bit}} \\
\midrule
Linear & 6.99 & 0.981 & 0.042 & \textbf{0.0083} & 0.990 \\
Shapley & 6.99 & \textbf{0.985} & \textbf{0.029} & 0.0084 & \textbf{0.990} \\
\bottomrule
\end{tabular}
\end{table*}

\begin{table*}[p]
\centering
\caption{FP Symmetric Quantization Results (4-8 Config)}
\label{tab:fp-sym-4-8}
\small
\begin{tabular}{@{}lccccc@{}}
\toprule
Method & BW & EAR $\uparrow$ & Margin $\downarrow$ & KL $\downarrow$ & PPL $\uparrow$ \\
\midrule
\multicolumn{6}{l}{\textit{Uniform Baselines}} \\
\midrule
Uniform & 8.00 & 0.989 & 0.024 & 0.0016 & 0.997 \\
\midrule
\multicolumn{6}{l}{\textit{5.5-bit}} \\
\midrule
Linear & 5.50 & 0.949 & 0.083 & 0.0163 & 0.983 \\
Shapley & 5.50 & \textbf{0.957} & \textbf{0.076} & \textbf{0.0161} & \textbf{0.983} \\
\midrule
\multicolumn{6}{l}{\textit{6.0-bit}} \\
\midrule
Linear & 6.00 & 0.960 & \textbf{0.076} & 0.0136 & 0.984 \\
Shapley & 6.00 & \textbf{0.971} & 0.083 & \textbf{0.0134} & \textbf{0.985} \\
\midrule
\multicolumn{6}{l}{\textit{6.5-bit}} \\
\midrule
Linear & 6.50 & 0.966 & \textbf{0.074} & \textbf{0.0107} & 0.987 \\
Shapley & 6.50 & \textbf{0.969} & 0.080 & 0.0109 & \textbf{0.987} \\
\midrule
\multicolumn{6}{l}{\textit{7.0-bit}} \\
\midrule
Linear & 7.00 & \textbf{0.974} & 0.076 & 0.0077 & 0.990 \\
Shapley & 7.00 & 0.969 & \textbf{0.074} & \textbf{0.0076} & \textbf{0.991} \\
\bottomrule
\end{tabular}
\end{table*}

\end{document}